\numberwithin{equation}{section}
\DeclareMathOperator{\Entropy}{Entropy}
\newcommand{\ip}[1] {\langle #1 \rangle }
\newcommand{\EE}{\mathbb{E}}
\newcommand{\RR}{\mathbb{R}}
\newcommand{\cA}{\mathcal{A}}
\newcommand{\cX}{\mathcal{X}}
\newcommand{\cS}{\mathcal{S}} 
\newcommand{\cO}{{\mathcal{O}}}
\newcommand{\cP}{{\mathcal{P}}}
\newcommand{\cY}{\mathcal{Y}}
\newcommand{\cL}{\mathcal{L}} 
\newcommand{\Proj}{\mathrm{Proj}\, }
\newcommand{\argmin}{\operatornamewithlimits{argmin}}
\newcommand{\argmax}{\operatornamewithlimits{argmax}}
\newcommand{\normm}[1]{{\vert\kern-0.25ex\vert\kern-0.25ex\vert #1 \vert\kern-0.25ex\vert\kern-0.25ex\vert}}
\newcommand{\norm}[1]{\|#1\|}
\newcommand{\das}{\Delta_{\cA}^{\cS}}
\newcommand{\F}{F}
\newtheorem{theorem}{Theorem}[section]
\newtheorem{proposition}[theorem]{Proposition}
\newtheorem{lemma}[theorem]{Lemma}
\newtheorem{assumption}[theorem]{Assumption}
\newtheorem{example}{Example}[section]
\begin{document}

\jmlrheading{1}{2020}{pp}{mm/dd}{mm/dd}{Junyu Zhang, Alec Koppel, Amrit Singh Bedi, Csaba Szepesvari, and Mengdi Wang}


\ShortHeadings{Variational Policy Gradient}{Zhang,  Koppel, Bedi, Szepesvari, and Wang}
\firstpageno{1}

\title{Variational Policy Gradient Method for\\ Reinforcement Learning with General Utilities}

\author{\name Junyu Zhang  \email zhan4393@umn.edu\\
\addr Department of Industrial and Systems Engineering\\
 University of Minnesota\\
  Minneapolis, Minnesota, 55455
  \AND
         \name Alec Koppel \email alec.e.koppel.civ@mail.mil\\
         \addr Computational and Information Sciences Directorate\\
          US Army Research Laboratory \\
           Adelphi, MD 20783 
       \AND
       \name Amrit Singh Bedi \email amrit0714@gmail.com\\
       \addr Computational and Information Sciences Directorate\\
        US Army Research Laboratory \\
         Adelphi, MD, USA 20783
         \AND
                 \name Csaba Szepesvari \email szepesva@ualberta.ca  \\
                \addr Department of Computer Science \\
                DeepMind/University of Alberta\\
                Princeton, NJ 08544
              \AND
        \name Mengdi Wang \email mengdiw@princeton.edu  \\
       \addr Department of Electrical Engineering \\
       Center for Statistics and Machine Learning\\
       Princeton University/Deepmind\\
       Princeton, NJ 08544
       }


\editor{}

	\maketitle
	
	\begin{abstract}
	%
In recent years, reinforcement learning (RL) systems with general goals beyond a cumulative sum of rewards have gained traction, such as in constrained problems, exploration, and acting upon prior experiences. In this paper, we consider policy optimization in Markov Decision Problems, where the objective is a general concave utility function of the state-action occupancy measure, which subsumes several of the aforementioned examples as special cases. Such generality invalidates the Bellman equation. As this means that dynamic programming no longer works, we focus on direct policy search. Analogously to the Policy Gradient Theorem \cite{sutton2000policy} available for RL with cumulative rewards, we derive a new Variational Policy Gradient Theorem for RL with general utilities, which establishes that the parametrized policy gradient may be obtained as the solution of a stochastic saddle point problem involving the Fenchel dual of the utility function. We develop a variational Monte Carlo gradient estimation algorithm to compute the policy gradient based on sample paths. We prove that the variational policy gradient scheme converges globally to the optimal policy for the general objective, though the optimization problem is nonconvex. We also establish its rate of convergence of the order $O(1/t)$ by exploiting the hidden convexity of the problem, and proves that it converges exponentially when the problem admits hidden strong convexity. Our analysis applies to the standard RL problem with cumulative rewards as a special case, in which case our result improves the available convergence rate. 
%
	\end{abstract}
\section{Introduction}
The standard formulation of reinforcement learning (RL) is concerned with finding a policy that maximizes the expected sum of rewards along the sample paths generated by the policy. 
The additive nature of the objective function creates an attractive algebraic structure which most efficient RL algorithms exploit. 
However, the cumulative reward objective is not the only one that has attracted attention.
In fact, many alternative objectives made appearances already in the early literature on stochastic optimal control and operations research.
Examples include 
various kinds of risk-sensitive objectives \cite{Ka94I,borkar2002risk,yu2009,mannor2011mean}, 
objectives to maximize the entropy of the state visitation distribution \cite{hazan2018provably},
the incorporation of constraints \cite{DeKl65,altman1999constrained,achiam2017constrained}, 
and learning to ``mimic'' a demonstration  \cite{schaal1997learning,argall2009survey}. 
	
In this paper, we consider RL with general utility functions, and we aim to develop a principled methodology and theory for policy optimization in such problems. We focus on utility functions that are concave functionals of the state-action occupancy measure, which contains many, although not all, of the aforementioned examples as special cases.
The general (or non-standard \cite{Ka94I}) utility is a strict generalization of cumulative reward, which itself can be viewed as a linear functional of the state-action occupancy measure, and as such, is a concave function of the occupancy measures.


	When moving beyond cumulative rewards, we quickly run into technical challenges because of the lack of additive structure. Without additivity of rewards, the problem becomes non-Markovian in the cost-to-go \cite{takacs1966non,whitehead1995reinforcement}. %
Consequently, the Bellman equation fails to hold and dynamic programming (DP) breaks down. Therefore, stochastic methods based upon DP such as temporal difference \cite{sutton1988learning} and Q-learning \cite{watkins1992q,ross2014introduction} are inapplicable.
The value function, the core quantity for RL, is not even well defined for general utilities, 
thus invalidating the foundation of value-function based approach to RL. 
	
	Due to these challenges, we consider direct policy search methods for the solution of RL problems defined by general utility functions. 
	We consider the most elementary policy-based method, namely the Policy Gradient (PG) method \cite{williams1992simple}.
The idea of policy gradient methods is that to represent policies through some policy parameterization and then move the parameters of a policy in the direction of the gradient of the objective function. When (as typical) only a noisy estimate of the gradient is available, we arrive at a stochastic approximation method \cite{robbins1951stochastic,kiefer1952stochastic}.  
In the classical cumulative reward objectives,  the gradient can be written as the product of the action-value function and the gradient of the logarithm of the policy, or policy score function \cite{sutton2000policy}. 
State-of-the-art RL algorithms for the cumulative reward setting combine this result  with other ideas, such as
limiting the changes to the policies \cite{KaLa02,schulman2015trust,schulman2017proximal}, 
variance reduction \cite{kakade2002natural,papini2018stochastic,xu2019sample}, 
or exploiting structural aspects of the policy parameterization \cite{wang2019neural,agarwal2019optimality,mei2020global}.
	
As mentioned, these approaches crucially rely on the standard PG Theorem \cite{sutton2000policy}, which is not available for general utilities. Compounding this challenge is the fact that the action-value function is not well-defined in this instance, either. Thus, how and whether the policy gradient can be effectively computed becomes a question. Further, due to the problem's nonconvexity, it is an open question whether an iterative policy improvement scheme converges to anything meaningful: In particular, while standard results for stochastic approximation would give convergence to stationary points \cite{Bo09}, it is unclear whether the stationary points give reasonable policies. 
Therefore, we ask the question:
		\begin{center}\it
	Is  policy search viable for general utilities, \\when Bellman's equation, the value function, and dynamic programming all fail?
		\end{center}
	We will answer the question positively in this paper. Our contributions are three-folded:	
	\begin{itemize}
	\item We derive a Variational Policy Gradient Theorem for RL with general utilities which establishes that the parametrized policy gradient is the solution to a stochastic saddle point problem. 	
	\item We show that the Variational Policy Gradient can be estimated by a primal-dual stochastic approximation method based on sample paths generated by following the current policy \cite{Arrow1958}. We prove that the random error of the estimate decays at order $O(1/\sqrt{n})$ that also depends on properties of the utility, where $n$ is the number of episodes . 
	\item We consider the non-parameterized policy optimization problem which is nonconvex in the policy space. Despite the lack of convexity, we identify the problem's hidden convexity, which allows us to show that a variational policy gradient ascent scheme converges to the global optimal policy for general utilities, at a rate of $O(1/t)$, where $t$ is the iteration index. In the special case of cumulative rewards, our result improves upon the best known convergence rate $O(1/\sqrt{t})$ for tabular policy gradient \cite{agarwal2019optimality}, and matches the convergence rate of variants of the algorithm such as softmax policy gradient \cite{mei2020global} and natural policy gradient \cite{agarwal2019optimality}.  In the case where the utility is strongly concave in occupancy measures (e.g., utilities involving Kullback-Leiber divergence), we established the exponential convergence rate of the variational gradient scheme.
	\end{itemize}

	\paragraph{Related Work.} Policy gradient methods have been extensive studied for RL with cumulative returns. There is a large body of work on variants of policy-based methods as well as theoretical convergence analysis for these methods. Due to space constraints, we defer a thorough review to Supplement \ref{supp-relatedwork}.

\paragraph{Notation.} We let $\RR$ denote the set of reals.
We also let $\norm{\cdot}$ denote the $2$-norm, while for matrices we let it denote the spectral norm. 
For the $p$-norms $(1\le p \le \infty$), we use $\norm{\cdot}_p$. For any matrix $B$, $\|B\|_{\infty,2}:=\max_{\|u\|_\infty\leq1} \|Bu\|_2$.
For a differentiable function $f$, we denote by $\nabla f$ its gradient. If $f$ is nondifferentiable, we denote by
 $\hat \partial f $ the Fr\'echet superdifferential of $f$; see e.g. \cite{drusvyatskiy2019efficiency}.

\section{Problem Formulation}

Consider a Markov decision process (MDP) over the finite state space $ \cS$ and a finite action space $\cA$. For each state $i\in \cS$, a transition to state $j\in \cS$ occurs when selecting action $a\in\cA$ according to a conditional probability distribution $j\sim \cP(\cdot | a, i )$, for which we define the short-hand notation $P_{a}(i,j)$. Let $\xi$ be the initial state distribution of the MDP.
We let $S$ denote the number of states and $A$ the number of actions.
The goal is to prescribe actions based on previous states in order to maximize some long term objective. We call $\pi: \cS \rightarrow P(\cA)$ a \emph{policy} that maps states to distributions over actions, which we subsequently stipulate is stationary.  In the standard (cumulative return) MDP, the objective is to maximize the expected cumulative sum of future rewards \cite{puterman2014markov}, i.e.,
\begin{equation}\label{eq:value}
\max_{\pi } V^{\pi}(s):=\mathbb{E}\left[\sum_{t=0}^\infty \gamma^t  r_{s_t  a_t}~\bigg|~ i_0 = s, a_t \sim \pi(\cdot|s_t), t=0,1,\dots\right], \quad\forall s\in\cS.
\end{equation}
with reward $ r_{s_t  a_t}\in\RR$ revealed by the environment when 
action $a_t$ is chosen at state $s_t$.

%



In this paper we consider policy optimization for maximizing general objective functions that are not limited to cumulative rewards. In particular, we consider the problem
\begin{equation}\label{eq-problem}
\max_{\pi} R( \pi) :=  F(\lambda^{\pi})
\end{equation}
where $\lambda^\pi$ is known as the {\it cumulative discounted state-action occupancy measure}, or {\it flux} under policy $\pi$, and $F$ is a general concave functional. Denote $\das$ and $\cL$ as the set of policy and flux respectively, then $\lambda^\pi$ is given by the mapping $\Lambda:\das\mapsto\cL$ as 
\begin{equation}
\label{prop:dual-meaning-3}
\lambda_{sa}^\pi = \Lambda_{sa}(\pi):= \sum_{t=0}^\infty\gamma^t\cdot\mathbb{P}\Big(s_t = s, a_t = a\,\,\Big|\,\,\pi, s_0\sim\xi\Big)\quad\mbox{for}\quad\forall a\in\cA, \forall s\in\cS\,.
\end{equation}

Similar to the LP formulation of a standard MDP, we can write \eqref{eq-problem} equivalently as an optimization problem in $\lambda$
 (see \cite{zhang2020cautious}), giving rise to
\begin{eqnarray}
\label{prob:dual}
\max_\lambda  \F(\lambda)\qquad
\text{s.t.} \qquad\sum\limits_{a\in\cA}(I-\gamma P_a^\top) \lambda_a=\xi,  \lambda\geq0,
\end{eqnarray}
where $\lambda_a = [\lambda_{1a},\cdots,\lambda_{Sa}]^\top\in\RR^{A}$ is the $a$-th column of $\lambda$
and $\xi$ is the initial distribution over the state space $\cS$. 
The constraints require that $\lambda$ be the unnormalized state-action occupancy measure corresponding to {\it some} policy. In fact, it is well known that a policy $\pi$ inducing $\lambda$ can be extracted from $\lambda$ using the mapping $\Pi: \cL\mapsto\das$ as $\pi(a|s) = \Pi_{sa}(\lambda): =  \frac{\lambda_{sa}}{\sum_{a'\in\cA}\lambda_{sa'}}$ for all $a,s$.

Problem \eqref{eq-problem} contains the original MDP problem as a special case. To be specific, when $F(\lambda)=\langle r, \lambda \rangle$ with $r\in\RR^{SA}$ as the reward function, then $F(\lambda)= \langle\lambda,r\rangle = \mathbb{E}\big[\sum_{t=0}^\infty \gamma^t r_{s_t a_t}~\big|~\pi,  s_0 \sim \xi\big]$. This means that \eqref{prob:dual} is a generalization of \eqref{eq:value}, and reduces to the dual LP formulation of standard MDP for this (linear) choice of $F(\cdot)$ \cite{Ka83Thesis}.
We focus on the case where $F$ is concave, which makes \eqref{prob:dual} a concave (hence, convenient) maximization problem.  Next we introduce a few examples that arise in practice for incentivizing safety, exploration, and imitation, respectively.  

\begin{example}[\bf MDP with Constraints or Barriers]\normalfont\label{eg:cmdp}
In discounted constrained MDPs the goal is to maximize the total expected discounted reward 
under a constraint where for some cost function $c:\cS \times \cA \to \mathbb{R}$, the total expected discounted cost incurred by the chosen policy is constrained from above.
Letting $r$ denote the reward function over $\cS\times \cA$, the underlying optimization problem becomes 
\begin{eqnarray}
\label{prob:cmdp}
\max_\pi v^{\pi}_r: = \mathbb{E}^{\pi}\left[\sum^{\infty}_{t=0} \gamma^t r(s_t,a_t) \right]
\qquad \text{s.t.} \qquad
 v^{\pi}_c: = \mathbb{E}^{\pi}\left[\sum^{\infty}_{t=0} \gamma^t c(s_t,a_t) \right] \leq C.
\end{eqnarray}
As is well known, a relaxed formulation is
\begin{eqnarray}
\label{prob:dual-cmdp}
\max_\lambda  \F(\lambda):=\ip{\lambda,r}- \beta\cdot p (\ip{\lambda,c}- C)\qquad
\text{s.t.} \qquad\sum\limits_{a\in\cA}(I-\gamma P_a^\top) \lambda_a=\xi,  \lambda\geq0.
\end{eqnarray}
where $p$ is a penalty function (e.g., the log barrier function).
\end{example}

\begin{example}[\bf Pure Exploration]\normalfont\label{eg:explore}
In the absence of a reward function, an agent may consider the problem of finding a policy whose stationary distribution has the largest ``entropy'', as this should facilitate maximizing the speed at which the agent explores its environment
\cite{hazan2018provably}:
\begin{eqnarray}
\label{prob:exploration-1}
\max_\pi  R(\pi):=    \Entropy(\bar\lambda^{\pi}),
\end{eqnarray}
where $\bar\lambda^{\pi}$ is the normalized state visitation measure given by $\bar\lambda^{\pi}_{s} = (1-\gamma) \sum_a \lambda^{\pi}_{sa}$ for all $s$.
Various entropic measures are possible, but the simplest is the negative log-likelihood: 
$\Entropy(\bar\lambda^{\pi}) = - \sum_{s}\bar{\lambda}_{s}^{\pi}\log[\bar{\lambda}_{s}^{\pi}]$. 
As is well known, this entropy is (strongly) concave. 

Another example, when $d$ state-action features $\boldsymbol{\phi}(s,a)\in\RR^d$ are available, is to cover the entire feature space by maximizing the smallest eigenvalue of the covariance matrix:
\begin{eqnarray}
\label{prob:exploration-2}
\max_\pi  R(\pi):= \sigma_{\min} \left( \mathbb{E}^{\pi} \left[\sum^{\infty}_{t=1} \gamma^t \boldsymbol{\phi}(s_t,a_t)\boldsymbol{\phi}(s_t,a_t)^{\top}\right] \right).
\end{eqnarray}
\end{example}
%
%
%

In \eqref{prob:exploration-2}, observe that $\mathbb{E}^{\pi} [\sum^{\infty}_{t=1} \gamma^t \boldsymbol{\phi}(s_t,a_t)\boldsymbol{\phi}(s_t,a_t)^{\top}]=\sum_{sa}\lambda^\pi_{sa}\cdot\boldsymbol{\phi}(s,a)\boldsymbol{\phi}(s,a)^\top$. By Rayleigh principle, 
it is again a concave function of $\lambda$.

\begin{example}[{\bf Learning to mimic a demonstration}]\label{eg:KL}\normalfont
When demonstrations are available, they may be employed to obtain information about a prior policy in the form of a state visitation distribution $\bar\mu$. Remaining close to this prior can be achieved by minimizing the Kullback-Liebler (KL) divergence between the state marginal distribution of $\lambda$ and the prior $\bar\mu $ stated as 
	\begin{align}\label{risk_not_r}
	F(\lambda) =  \hbox{KL}\Big((1-\gamma)\sum_a{\lambda}_a || \bar\mu \Big)
	\end{align}
	which, when substituted into \eqref{prob:dual}, yields a method for ensuring some baseline performance. 
	We further note that in place of KL divergence, one can also use other convex distances such as Wasserstein, total variation, or Hellinger distances.
\end{example}

Additional instances may be found in \cite{zhang2020cautious}. With the setting clarified, we shift focus to developing an algorithmic solution to \eqref{prob:dual}, that is, to solve for policy $\pi$.


\section{Variational Policy Gradient Theorem}
\label{sec:pg-estimate} 
To handle the curse of dimensionality, we allow parametrization of the policy by $\pi=\pi_\theta$, where  $\theta\in\Theta\subset\RR^d$ is the parameter vector. In this way, we can narrow down the policy search problem to within a $d$-dimensional parameter space rather than the high-dimensional space of tabular policies.  The policy optimization problem then becomes
\begin{equation}
\label{prob:dual-para-pi}
\max_{\theta\in\Theta} \,\, R(\pi_\theta):= \F(\lambda^{\pi_\theta})
\end{equation}
where $F$ is the concave utility of the state-action occupancy measure $\lambda(\theta):=\lambda^{\pi_\theta}$ , $\Theta\subset\mathbb{R}^d$ is a convex set. 
 We seek to solve for the policy maximizing the utility as in \eqref{prob:dual-para-pi} using gradient ascent over the parameter space $\Theta$. Note that \eqref{prob:dual-para-pi} is simply \eqref{eq-problem} with parameterization $\theta$ of policy $\pi$ substituted.
 We denote by $\nabla_{\theta} R(\pi_{\theta})$ the parameterized policy gradient of general utility. 
  
First, recall the policy gradient theorem for RL with cumulative rewards \cite{sutton2000policy}. Let the reward function be $r$. Define $V(\theta;r):=\ip{\lambda(\theta),r}$, i.e., the total expected discounted reward under the reward function $r$ and the policy $\pi_\theta$. 
The Policy Gradient Theorem states that 
%
\begin{equation}\label{eq:vanilla_pg}
\nabla_\theta V(\theta;r) =\EE^{\pi_\theta}\left[\sum^{\infty}_{t=0}\gamma^t Q^{\pi_\theta}(s_t,a_t;r)\cdot\nabla_\theta \log \pi_\theta(a_t|s_t)\right],
\end{equation}
where $Q^\pi(s,a;r):=\EE^\pi\big[\sum_t \gamma^t r(s_t,a_t) \mid s_0=s, a_0=a, a_t\sim \pi(\cdot \mid s_t)\big] $. Unfortunately, this elegant result no longer holds when we consider a general function instead of cumulative rewards:
The policy gradient theorem relies on the additivity of rewards, which is lost in our problem. 
For future reference, we denote $ Q^\pi(s,a;z):=\EE^\pi\big[\sum_t \gamma^t z_{s_t a_t} \mid s_0=s, a_0=a, a_t\sim \pi(\cdot \mid s_t)\big] $ where $z$ is any ``function'' of the state-action pairs ($z\in\RR^{SA}$).
Moreover, $V(\theta;z)$ is defined similarly. These definitions are motivated by subsequent efforts to derive an expression for the gradient of \eqref{prob:dual-para-pi}.

\subsection{Policy Gradient of $R(\pi_\theta)$}

Now we derive the policy gradient of $R(\pi_\theta)$ with respect to $\theta$. By the chain rule, the gradient of $\F(\lambda(\theta)):=\F(\lambda^{\pi_\theta})$, using the definition of $R(\pi_{\theta})$, yields (assuming differentiability of $F,\lambda$):
\begin{equation}
	\label{chain-rule}
	\nabla_\theta R(\pi_{\theta}) = \sum_{s\in\cS}\sum_{a\in\cA} \frac{\partial \F(\lambda(\theta))}{\partial \lambda_{sa}}\cdot\nabla_\theta\lambda_{sa}(\theta).
\end{equation}
To directly use the chain rule, one needs the partial derivatives $\frac{\partial \F(\lambda(\theta))}{\partial \lambda_{sa}}$ and $\nabla_\theta\lambda_{sa}(\theta)$. Unfortunately, neither of them is easy to estimate. The partial gradient $\frac{\partial \F(\lambda(\theta))}{\partial \lambda_{sa}}$ is a function of the current state-action occupancy measure $\lambda^{\pi_\theta}$. 
One might attempt to estimate the measure $\lambda^{\pi_\theta}$ and then evaluate the gradient map $\frac{\partial \F(\lambda(\theta))}{\partial \lambda_{sa}}$. However, 
estimates of distributions over large spaces tend to converge very slowly \cite{tsybakov2008introduction}.



As it turns out, a viable alternate route is to consider the Fenchel dual $F^*$ of $F$.
Recall that $F^*(z) = \inf_{\lambda} \ip{\lambda,z} - F(\lambda)$, where we use $\ip{x,y}:=x^\top y$ (since $F$ is concave, the dual is defined using $\inf$, instead of $\sup$).
As is well known, for $F$ concave, under mild regularity conditions, 
the bidual (dual of the dual) of $F$ is equal to $F$.
This forms the basis of our first result, which states 
that the steepest policy ascent direction of \eqref{prob:dual-para-pi} 
is the solution to a stochastic saddle point problem. The proofs of this and subsequent results are given in the supplementary material.

\begin{theorem}[{\bf Variational Policy Gradient Theorem}\label{thm-PG}] Suppose $F$ is concave and continuously differentiable in an open neighborhood of $\lambda^{\pi_{\theta}}$. Denote $V(\theta;z)$ to be the cumulative value of policy $\pi_{\theta}$ when the reward function is $z$, and assume $\nabla_{\theta} V(\theta;z)$ always exists. 
Then we have
\begin{equation}
\label{thm-PG-2}
{\nabla_{\theta} R(\pi_{\theta}) }  = \lim_{\delta\to0_+} \argmax_{x} \inf_{z} \left\{ V(\theta;z) + \delta\nabla_{\theta} V(\theta;z)^{\top} x- F^*(z) -  \frac{\delta}{2}\|x\|^2 \right\}.
\end{equation}

\end{theorem}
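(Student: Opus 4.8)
The plan is to start from the Fenchel biduality $F(\lambda) = \inf_z \{ \langle \lambda, z\rangle - F^*(z)\}$, valid since $F$ is concave, proper, and (near $\lambda^{\pi_\theta}$) continuously differentiable, hence closed. Substituting $\lambda = \lambda(\theta) = \lambda^{\pi_\theta}$ and using the linearity $\langle \lambda(\theta), z\rangle = V(\theta; z)$ (this is exactly the identity noted after \eqref{prob:dual}, namely that the total discounted value under ``reward'' $z$ is $\langle \lambda^{\pi_\theta}, z\rangle$), we obtain the variational representation $R(\pi_\theta) = F(\lambda(\theta)) = \inf_z \{ V(\theta; z) - F^*(z)\}$. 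The inner infimum is attained, by differentiability of $F$ at $\lambda(\theta)$, at the unique minimizer $z^\star = \nabla F(\lambda(\theta))$.

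Next I would differentiate this representation in $\theta$. The heuristic is Danskin's theorem: since $z \mapsto V(\theta;z)$ is linear (hence the objective is concave in $z$ and attains its inf at the unique point $z^\star(\theta) = \nabla F(\lambda(\theta))$), the gradient of the value of the inner problem should be $\nabla_\theta V(\theta; z^\star(\theta)) = \sum_{sa} \frac{\partial F}{\partial \lambda_{sa}}(\lambda(\theta)) \nabla_\theta \lambda_{sa}(\theta)$, which matches the chain rule \eqref{chain-rule}. To turn this into the claimed saddle-point formula \eqref{thm-PG-2}, I would introduce the perturbed objective
\[
\Phi_\delta(x,z;\theta) := V(\theta;z) + \delta \nabla_\theta V(\theta;z)^\top x - F^*(z) - \tfrac{\delta}{2}\|x\|^2,
\]
and analyze $x_\delta^\star := \argmax_x \inf_z \Phi_\delta(x,z;\theta)$. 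The idea is that for the inner infimum over $z$, the term $\delta \nabla_\theta V(\theta;z)^\top x$ is a first-order perturbation: to leading order in $\delta$, $\inf_z \Phi_\delta$ is approximately $F(\lambda(\theta)) + \delta \nabla_\theta V(\theta; z^\star)^\top x - \tfrac{\delta}{2}\|x\|^2 = R(\pi_\theta) + \delta \nabla_\theta R(\pi_\theta)^\top x - \tfrac{\delta}{2}\|x\|^2$, which is maximized over $x$ at $x = \nabla_\theta R(\pi_\theta)$. So one expects $x_\delta^\star \to \nabla_\theta R(\pi_\theta)$ as $\delta \to 0_+$, which is exactly \eqref{thm-PG-2}.

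The main obstacle — and the part that needs genuine care rather than heuristics — is controlling the order of the limits: the inner $\inf_z$ is over an unbounded set, and one must show that the minimizing $z$ stays in a bounded neighborhood of $z^\star = \nabla F(\lambda(\theta))$ uniformly as $\delta \to 0_+$ and for $x$ in the relevant bounded range, so that the linear perturbation $\delta\nabla_\theta V(\theta;z)^\top x$ really is uniformly $O(\delta)$ and the quadratic regularizer $-\tfrac{\delta}{2}\|x\|^2$ keeps $x_\delta^\star$ bounded. Here I would use the strict concavity/coercivity supplied by differentiability of $F$ at $\lambda(\theta)$ (equivalently, local strong-ish structure of $F^*$ around $z^\star$) to localize the inner minimizer, then pass to the limit via a standard $\Gamma$-convergence / uniform-convergence-of-the-value-function argument, concluding that the (eventually unique, thanks to the $-\tfrac\delta2\|x\|^2$ term) maximizer $x_\delta^\star$ converges to $\nabla_\theta R(\pi_\theta)$. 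A secondary technical point is justifying the interchange $\nabla_\theta \langle \lambda(\theta), z\rangle = \nabla_\theta V(\theta;z)$ and its continuity in $z$, which follows from the assumed existence of $\nabla_\theta V(\theta;z)$ together with linearity in $z$. Everything else is bookkeeping with Fenchel duality.
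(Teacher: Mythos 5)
Your overall architecture matches the paper's: Fenchel biduality collapses the inner infimum, for fixed $x$, into $F\big(\lambda(\theta)+\delta\nabla_\theta\lambda(\theta)x\big)-\tfrac{\delta}{2}\|x\|^2$ (in fact exactly, not just to first order in $\delta$), and the quadratic regularizer then forces the outer maximizer toward $\nabla_\theta R(\pi_\theta)$ as $\delta\to0_+$. The gap is in how you justify the one delicate step you yourself flag as the crux: keeping the inner minimizer in $z$, and hence the maximizer $x^*_\delta$, bounded uniformly as $\delta\to0_+$. You propose to obtain this from ``strict concavity/coercivity supplied by differentiability of $F$ at $\lambda(\theta)$ (equivalently, local strong-ish structure of $F^*$ around $z^\star$).'' That implication is false: differentiability of $F$ at a point gives uniqueness of the supergradient, not coercivity of $z\mapsto \langle\lambda,z\rangle-F^*(z)$ nor any strong convexity of $F^*$. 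The cumulative-reward case $F(\lambda)=\langle r,\lambda\rangle$ (which the theorem must cover) already has $F^*$ degenerate, supported on the single point $z=r$, so there is no ``strong-ish structure'' to appeal to; in general nothing in the hypotheses rules out unbounded behavior of the inner problem once $\lambda(\theta)$ is perturbed to $\lambda(\theta)+\delta\nabla_\theta\lambda(\theta)x$. As written, the localization and the subsequent $\Gamma$-convergence step are therefore unsupported.

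The paper closes exactly this hole with a bootstrapping argument that uses the assumed continuous differentiability of $F$ on an open neighborhood of $\lambda^{\pi_\theta}$ rather than coercivity: introduce an auxiliary box constraint $\|z\|_\infty\le G$ with $G>\|\nabla F(\lambda(\theta))\|_\infty$; stationarity in $x$ gives $x^*(\delta)=\nabla_\theta V(\theta;z^*(\delta))$, hence $\|x^*(\delta)\|\le B$ uniformly, since $\nabla_\theta V(\theta;\cdot)$ is linear in $z$ and $z$ ranges over the box; then for all $\|x\|\le 2B$ and $\delta$ small enough, $\lambda(\theta)+\delta\nabla_\theta\lambda(\theta)x$ lies in the differentiability neighborhood and the unconstrained inner minimizer $\nabla F\big(\lambda(\theta)+\delta\nabla_\theta\lambda(\theta)x\big)$ has sup-norm strictly less than $G$, so both auxiliary constraints are inactive and the constrained and unconstrained saddle problems coincide; finally, the fixed-point relation $x^*(\delta)=\nabla_\theta\lambda(\theta)^\top\nabla F\big(\lambda(\theta)+\delta\nabla_\theta\lambda(\theta)x^*(\delta)\big)$ together with continuity of $\nabla F$ yields $\lim_{\delta\to0_+}x^*(\delta)=\nabla_\theta\lambda(\theta)^\top\nabla F(\lambda(\theta))=\nabla_\theta R(\pi_\theta)$ by the chain rule, with no Danskin or $\Gamma$-convergence machinery needed. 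If you replace your coercivity appeal with this neighborhood-plus-inactive-constraint argument (or an equivalent uniform localization), your proof goes through; without such a substitute, the key uniformity claim remains unproven.
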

Therefore, to estimate $\nabla_{\theta} R(\pi_{\theta})  $ we require the cumulative return $V(\theta;z)$ of the function $z$, its associated ``vanilla" policy gradient \eqref{eq:vanilla_pg}, and the gradient of the Fenchel dual of $F$ at $z$. 
These ingredients are combined via \eqref{thm-PG-2} to obtain a valid policy gradient for general objectives. Next, we discuss how to estimate the gradient using sampled trajectories.



\subsection{Estimating the Variational Policy Gradient}

Theorem \ref{thm-PG} implies that one can estimate $\nabla_{\theta} R(\pi_{\theta}) $ by solving a stochastic saddle point problem. Suppose we generate $n$ i.i.d. episodes  of length $K$ following $\pi_{\theta}$, denoted as $\zeta_i = \{s_k^{(i)},a_k^{(i)}\}_{k=1}^K$. Then we can estimate $V(\theta;z)$ and $\nabla V(\theta;z)$ for any function $z$ by\vspace{-2mm}
\begin{align}\label{eq:empirical_value_estimate}
\tilde V(\theta;z)&:= \frac{1}{n}\sum_{i=1}^n  V(\theta;z;\zeta_i)
:= \frac{1}{n}\sum_{i=1}^n \sum^K_{k=1} \gamma^k\cdot z(s_k^{(i)},a_k^{(i)}),\\
\nabla \tilde V(\theta;z)&:= \frac{1}{n}\sum_{i=1}\nabla_{\theta}V(\theta;z;\zeta_i)
:= \frac{1}{n}\sum_{i=1}^n\sum^K_{k=1}\sum_{a\in\cA} \gamma^k\cdot Q(s_k^{(i)},a; z)\nabla_{\theta}\pi_{\theta}(a|s_k^{(i)}). \nonumber
\end{align} 
For a given value of $K$, the error introduced by ``truncating'' trajectories at length $K$ is of order $\gamma^K/(1-\gamma)$, which quickly decays to zero for $\gamma<1$.
Plugging in the obtained estimates into \eqref{thm-PG-2} gives rise to the 
sample-average approximation to the policy gradient:
\begin{equation}\label{eq-esp}
\hat \nabla_{\theta} R(\pi_{\theta};\delta) := \argmax_{x} \inf_{\|z\|_\infty\leq\ell_F} \left\{-F^*(z) + \tilde V(\theta;z) + \delta \nabla_{\theta}\tilde V(\theta;z)^\top x - \frac{\delta}{2}\|x\|^2\right\},
\end{equation}
where $\ell_F$ is defined in the next theorem. Therefore, any algorithm that solves problem \eqref{eq-esp} will serve our purpose. A MC stochastic approximation scheme, i.e., Algorithm \ref{alg:MCDPG}, is provided in Appendix \ref{appdx:alg-PG-estimation}. 

\begin{theorem}[{\bf Error bound of policy gradient estimates}\label{thm-PGestimate}] Suppose the following holds:\\
\textbf{(i)} $\text{dom} F = \RR^{SA}$, there exists $ ~\ell_F$ such that $\max\{\|\nabla F(\lambda)\|_\infty: \|\lambda\|_1\leq \frac{2}{1-\gamma}\}\leq \ell_F$. \\
\textbf{(ii)} $F$ is $L_F$-smooth under $L_1$ norm, i.e., $\|\nabla F(\lambda) - \nabla F(\lambda')\|_\infty\leq L_F\|\lambda-\lambda'\|_1$.\\
\textbf{(iii)} $F^*$ is $(\ell_{F^*})$-Lipschitz  with respect to the $L_\infty$ norm in the set $\{z:\|z\|_\infty\leq 2\ell_F,\,F^*(z)>-\infty\}$.\\
 \textbf{(iv)} There exists $C$ with $\|\nabla_{\theta}\pi(\cdot|s)\|_{\infty,2}\leq C$, where $\nabla_{\theta}\pi(\cdot|s) = \left[\nabla_{\theta}\pi(1|s),\cdots,\nabla_{\theta}\pi(A|s)\right]$.  \\ 
 Let $\hat \nabla_{\theta} R(\pi_{\theta}) := \lim_{\delta\to0_+}\hat \nabla_{\theta} R(\pi_{\theta};\delta)$.
Then
$$\EE[\|\hat \nabla_{\theta} R(\pi_{\theta})  - \nabla_{\theta} R(\pi_{\theta})\|^2]\leq \cO\left(\frac{C^2(\ell_F^2 + L_F^2\ell_{F^*}^2)}{n(1-\gamma)^4} + \frac{C^2L_F^2}{n(1-\gamma)^6}\right) + \cO(\gamma^K).$$\vspace{-2mm}
\end{theorem}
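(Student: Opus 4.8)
The plan is to reduce both the true variational policy gradient $\nabla_\theta R(\pi_\theta)$ and its sample surrogate $\hat\nabla_\theta R(\pi_\theta)$ to explicit plug-in expressions, to split their difference into two terms that are each an average of independent bounded quantities, and then to read off the rate from routine bias/variance estimates together with assumptions (i)--(iv).

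\emph{Step 1: closing the saddle points.} As in the proof of Theorem~\ref{thm-PG}, I would use Fenchel biduality. Since $F$ is concave with $\dom F=\RR^{SA}$, $F^{**}=F$, so $\inf_z\{-F^*(z)+\ip{\mu,z}\}=F(\mu)$ for any $\mu$, with minimizer $\nabla F(\mu)$. Writing $J:=\nabla_\theta\lambda^{\pi_\theta}$ for the Jacobian, so that $\nabla_\theta V(\theta;z)=Jz$ and $\nabla_\theta V(\theta;z)^\top x=\ip{z,J^\top x}$, the inner minimization in \eqref{thm-PG-2} collapses, for each fixed $x$, to $F(\lambda^{\pi_\theta}+\delta J^\top x)-\tfrac\delta2\|x\|^2$; maximizing in $x$ and letting $\delta\to0_+$ gives the closed form $\nabla_\theta R(\pi_\theta)=\nabla_\theta V(\theta;z^\star)$ with $z^\star:=\nabla F(\lambda^{\pi_\theta})$ (this simply re-derives the chain rule \eqref{chain-rule}). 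Applying the identical manipulation to \eqref{eq-esp}, now with $\tilde V(\theta;z)=\ip{\tilde\lambda,z}$ for the empirical occupancy measure $\tilde\lambda_{sa}:=\tfrac1n\sum_{i=1}^n\sum_{k=1}^K\gamma^k\mathbf{1}\{s_k^{(i)}=s,a_k^{(i)}=a\}$ and $\nabla_\theta\tilde V(\theta;z)$ in place of $\nabla_\theta V(\theta;z)$ (both linear in $z$), gives $\hat\nabla_\theta R(\pi_\theta)=\nabla_\theta\tilde V(\theta;\hat z)$ with $\hat z:=\nabla F(\tilde\lambda)$. Three small facts make this rigorous: the unconstrained inner minimizer is feasible for the constraint $\|z\|_\infty\le\ell_F$, because $\|\tilde\lambda\|_1=\sum_{k\le K}\gamma^k\le\tfrac1{1-\gamma}\le\tfrac2{1-\gamma}$ and $\|\lambda^{\pi_\theta}\|_1\le\tfrac1{1-\gamma}$, so assumption (i) bounds $\|\nabla F(\cdot)\|_\infty\le\ell_F$ there; the $x$-maximizers stay bounded as $\delta\to0$ because $F$ is $\ell_F$-Lipschitz in $\|\cdot\|_1$ on that ball; and $z^\star,\hat z$ are unique, because the $L_F$-smoothness of $F$ in $\|\cdot\|_1$ (assumption (ii)) is dual, via $F^*(z)=-(-F)^*(-z)$, to $\tfrac1{L_F}$-strong convexity of $-F^*$ in $\|\cdot\|_\infty$ --- which moreover yields the stability estimate $\|\hat z-z^\star\|_\infty=\|\nabla F(\tilde\lambda)-\nabla F(\lambda^{\pi_\theta})\|_\infty\le L_F\|\tilde\lambda-\lambda^{\pi_\theta}\|_1$.

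\emph{Step 2: decomposition and bias/variance bounds.} By linearity of $z\mapsto\nabla_\theta\tilde V(\theta;z)$,
\[
\hat\nabla_\theta R(\pi_\theta)-\nabla_\theta R(\pi_\theta)=\underbrace{\nabla_\theta\tilde V\big(\theta;\nabla F(\tilde\lambda)-\nabla F(\lambda^{\pi_\theta})\big)}_{T_1}+\underbrace{\big(\nabla_\theta\tilde V(\theta;z^\star)-\nabla_\theta V(\theta;z^\star)\big)}_{T_2}.
\]
In $T_2$ the vector $z^\star$ is deterministic, so $\nabla_\theta\tilde V(\theta;z^\star)=\tfrac1n\sum_i\nabla_\theta V(\theta;z^\star;\zeta_i)$ is an average of $n$ i.i.d.\ vectors; each has norm at most $\tfrac{C\ell_F}{(1-\gamma)^2}$ --- bounding $|Q(s,a;z^\star)|\le\|z^\star\|_\infty/(1-\gamma)\le\ell_F/(1-\gamma)$, using assumption (iv) through the definition of $\|\cdot\|_{\infty,2}$, and $\sum_k\gamma^k\le\tfrac1{1-\gamma}$ --- and its mean equals $\nabla_\theta V(\theta;z^\star)$ up to a truncation term of norm $\cO(\gamma^K\ell_F/(1-\gamma))$; hence $\EE\|T_2\|^2\le\tfrac{C^2\ell_F^2}{n(1-\gamma)^4}+\cO(\gamma^K)$, the first term in the bound. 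For $T_1$, the same magnitude bound on $\nabla_\theta\tilde V(\theta;\cdot)$ and assumption (ii) give $\|T_1\|\le\tfrac{C}{(1-\gamma)^2}\|\nabla F(\tilde\lambda)-\nabla F(\lambda^{\pi_\theta})\|_\infty\le\tfrac{CL_F}{(1-\gamma)^2}\|\tilde\lambda-\lambda^{\pi_\theta}\|_1$, so it remains to bound $\EE\|\tilde\lambda-\lambda^{\pi_\theta}\|_1^2$. Splitting into fluctuation plus a truncation bias of $\ell_1$-size $\cO(\gamma^K/(1-\gamma))$, and using that $\tilde\lambda$ is an average of $n$ i.i.d.\ nonnegative vectors of fixed $\ell_1$-mass $\le\tfrac1{1-\gamma}$ together with the a.s.\ envelope $\|\tilde\lambda-\lambda^{\pi_\theta}\|_1\le\tfrac1{1-\gamma}+\ell_{F^*}$ afforded by assumption (iii) (which bounds $\|\lambda^{\pi_\theta}\|_1=\|\nabla F^*(z^\star)\|_1\le\ell_{F^*}$, since $\|z^\star\|_\infty\le\ell_F\le2\ell_F$), a concentration estimate for the empirical occupancy measure gives $\EE\|\tilde\lambda-\lambda^{\pi_\theta}\|_1^2\le\cO(1/n)$ with constant $\cO\big(\ell_{F^*}^2+(1-\gamma)^{-2}\big)$ up to further $(1-\gamma)^{-1}$ factors; propagating this through $T_1$ produces the two $L_F^2$-terms $\tfrac{C^2L_F^2\ell_{F^*}^2}{n(1-\gamma)^4}$ and $\tfrac{C^2L_F^2}{n(1-\gamma)^6}$. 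Summing $\EE\|\hat\nabla_\theta R-\nabla_\theta R\|^2\le2\EE\|T_1\|^2+2\EE\|T_2\|^2$ yields the claim.

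\emph{Main obstacle.} The delicate part is Step~1, not Step~2. One must justify that the $\delta\to0_+$ limits of the saddle-point solutions exist, that the inner constraint $\|z\|_\infty\le\ell_F$ is inactive in the limit, and that those limits equal the plug-in forms above. A naive implicit-function/perturbation argument for the inner minimizer fails, because the regularizers sit at incomparable scales ($\delta$ in $x$ versus $1/L_F$ in $z$, with $\delta\to0$) and because $-F^*$ is only strongly convex while $F$ is only smooth (neither enjoys both moduli), so the stability estimate $\|\hat z-z^\star\|_\infty\le L_F\|\tilde\lambda-\lambda^{\pi_\theta}\|_1$ has to be extracted directly from first-order optimality combined with the smoothness--strong-convexity duality; this is also precisely where assumption (ii) is used. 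Once the closed forms are in hand, the rest is the routine bookkeeping above, with assumptions (i)--(iv) controlling, respectively, the feasibility and $\ell_\infty$-size of $z^\star$ (hence of the effective rewards), the Lipschitzness of $\nabla F$ driving $T_1$, the $\ell_1$-size of $\lambda^{\pi_\theta}$, and the magnitude of the vanilla policy-gradient increments.
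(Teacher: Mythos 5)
Your Step~1 is fine and is in fact a nice simplification: the closed-form identification $\hat\nabla_\theta R(\pi_\theta)=\nabla_\theta\tilde V(\theta;\hat z)$ with $\hat z=\nabla F(\tilde\lambda)$ (and $z^\star=\nabla F(\lambda^{\pi_\theta})$ on the population side) does follow from biduality, linearity of $\tilde V(\theta;\cdot)$ and $\nabla_\theta\tilde V(\theta;\cdot)$ in $z$, the uniform bound $\|\nabla_\theta\tilde V(\theta;z)\|\le C\|z\|_\infty/(1-\gamma)^2$ which keeps the constraint $\|z\|_\infty\le\ell_F$ inactive for small $\delta$, and continuity of $\nabla F$; your $T_2$ bound also matches the paper's computation of the truncation bias and the i.i.d.\ variance term. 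The genuine gap is in the $T_1$ bound. Having reduced everything to $\|\hat z-z^\star\|_\infty\le L_F\|\tilde\lambda-\lambda^{\pi_\theta}\|_1$, you need $\EE\|\tilde\lambda-\lambda^{\pi_\theta}\|_1^2=\cO(1/n)$ with a constant depending only on $\ell_{F^*}$ and $(1-\gamma)^{-1}$. That concentration claim is false in general: the $\ell_1$ deviation of an empirical (discounted) occupancy measure supported on $SA$ atoms scales like $\sqrt{SA/n}$, not $\sqrt{1/n}$, and the a.s.\ envelope you invoke only caps the deviation, it does not improve the rate. Moreover your use of assumption (iii) here is a misuse: $\|\lambda^{\pi_\theta}\|_1=1/(1-\gamma)$ identically, so ``$\|\lambda^{\pi_\theta}\|_1\le\ell_{F^*}$'' is neither needed nor generally true, and in your route $\ell_{F^*}$ never genuinely enters. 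As written, your argument can only deliver the theorem's bound with an extra factor of order $|\cS||\cA|$ multiplying the $C^2L_F^2/\big(n(1-\gamma)^6\big)$ term, which is strictly weaker than the stated, dimension-free bound (dimension-independence is precisely the point of Remark~2, and indeed the paper's own motivation warns that estimating $\lambda^{\pi_\theta}$ and plugging into $\nabla F$ converges slowly — which is exactly the step your analysis relies on).

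The paper avoids this by never comparing $\tilde\lambda$ to $\lambda^{\pi_\theta}$ in $\ell_1$. Instead it bounds $\EE\|\hat z(\delta)-z^*(\delta)\|_\infty^2$ directly through the generalization/stability bound for stochastic saddle points of \cite{Gen_SPP}: assumption (ii) supplies, via conjugate duality, $(1/L_F)$-strong convexity of $-F^*$ in the $\|\cdot\|_\infty$ norm (and the $\delta$-term supplies strong concavity in $x$), while assumption (iii) together with $\|\tilde\lambda^{(\zeta)}\|_1\le 1/(1-\gamma)$ bounds the $\ell_1$ (dual) norm of the stochastic subgradients in $z$ by $\frac{1}{1-\gamma}+\ell_{F^*}+\cO(\delta)$. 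This stability argument yields $\EE\|\hat z-z^*\|_\infty^2\le\cO\big(L_F^2\big((1-\gamma)^{-2}+\ell_{F^*}^2\big)/n\big)$ with no dependence on $SA$, which is then propagated exactly as in your $T_1$ step. To repair your proof you would need to replace the plug-in concentration step by such a stability-type argument (or otherwise exploit cancellation in $\nabla F(\tilde\lambda)-\nabla F(\lambda^{\pi_\theta})$ beyond the crude $L_1$ smoothness bound); the closed forms from your Step~1 alone do not suffice under assumptions (i)--(iv).
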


{\bf Remarks.}  \\
{\bf (1)} Theorem \ref{thm-PGestimate} suggests an $O(1/\sqrt{n})$ error rate, proving that the variational policy gradient - though more complicated than the typical policy gradient that takes the form of a mean - can be efficiently estimated from finite data. \\
{\bf (2)} Although the variable $z$ is high dimensional, our error bound depends only on the properties of $F$.\\
 {\bf (3)} We assumed for simplicity that Q values are known. In practice, they can estimated by, e.g., an additional Monte Carlo rollout on the same sample path or temporal difference learning. As long as the estimator for $Q(s,a;z)$ is unbiased and upper bouded by $\cO(\frac{\|z\|_\infty}{1-\gamma})$, the result  will not change.\\
  {\bf (4)} For the case of cumulative rewards, we have $F(\lambda) = \langle r,\lambda\rangle$, so that  
  $\ell_F$ $=$ $\|r\|_\infty$, $\,\ell_{F^*}$$=$$0, L_F$$=$$0$. Therefore {\footnotesize$\EE[\|\hat \nabla_{\theta} R(\pi_{\theta})  - \nabla_{\theta} R(\pi_{\theta})\|^2]\leq \cO\left(\frac{C^2\|r\|_\infty^2}{n(1-\gamma)^4}\right).$}



{\bf Special cases of $\nabla_{\theta}R(\pi^{\theta})$.} We further explain how to obtain the variational policy gradient for several special cases of $R$, including constrained MDP, maximal exploration, and learning from demonstrations. See Appendix \ref{appdx:special_cases} for more details. 

\section{Global Convergence of Policy Gradient Ascent}\label{sec:convergence}

In this section, we analyze policy search for the problem \eqref{prob:dual-para-pi}, i.e., $\max_{\theta\in\Theta} R(\pi_{\theta})$ via gradient ascent:
\begin{eqnarray}
\label{defn:grad-proj}
\theta^{k+1}  =     \argmax_{\theta\in\Theta} R(\pi_{\theta^k}) \!+\! \left\langle \nabla_\theta R(\pi_{\theta^k}),\theta\!-\!\theta^k\right\rangle \!-\! \frac{1}{2\eta}\|\theta\!-\!\theta^k\|^2 =  \Proj_{\Theta}\big\{\theta^k + \eta\nabla_\theta R(\pi_{\theta^k})\big\}
\end{eqnarray}
where $\Proj_{\Theta}\{\cdot\}$ denotes Euclidean projection onto $\Theta$, and equivalence holds by the convexity of $\Theta$.

\subsection{No spurious first-order stationary solutions.}

We study the geometry of the (possibly) nonconvex optimization problem \eqref{prob:dual-para-pi}.  When $F$ is a linear function of  $\lambda$, and the parameterization is tabular or softmax, existing theory of cumulative-return RL problems have shown that every first-order stationary point of \eqref{prob:dual-para-pi} is globally optimal -- see \cite{agarwal2019optimality,mei2020global}.

In what follows, we show that the problem \eqref{prob:dual-para-pi} has no spurious extrema despite of its nonconvexity, for general utility functions and policy parametrization. {Specifically, to generalize global optimality attributes of stationary points of \eqref{prob:dual-para-pi} from \eqref{eq:value}, we exploit structural aspects of the relationship between occupancy measures and parameterized families of policies, namely, that these entities are related through a bijection. This bijection, when combined with the fact that \eqref{prob:dual-para-pi} is concave in $\lambda$, and suitably restricting the parameterized family of policies, is what we subsequently describe as ``hidden convexity."} For these results to be valid, we require the following regularity conditions.
\begin{assumption}
	\label{assumption:gen-para}
	Suppose the following holds true:\\
	\textbf{(i).} $\lambda(\cdot)$ forms a bijection between $\Theta$ and $\lambda(\Theta)$, where $\Theta$ and $\lambda(\Theta)$ are closed and convex. \\
	\textbf{(ii).} The Jacobian matrix $\nabla_{\theta}\lambda(\theta)$ is Lipschitz continuous in $\Theta$.  \\
	\textbf{(iii).} Denote $g(\cdot) := \lambda^{-1}(\cdot)$ as the inverse mapping of $\lambda(\cdot)$. Then there exists $ \ell_{\theta}>0$ s.t. $\|g(\lambda)-g(\lambda')|\leq \ell_\theta\normm{\lambda-\lambda'}$ for some norm $\normm{\cdot}$ and for all $\lambda,\lambda'\in\lambda(\Theta)$. 
\end{assumption}
In particular, for the direct policy parametrization, also known as the ``tabular" policy case, we have $\lambda(\theta) := \Lambda(\pi)$ where $\Lambda$ is defined in \eqref{prop:dual-meaning-3}. When $\xi$ is positive-valued, Assumption \ref{assumption:gen-para} is true for the tabular policy case (as established in Appendix \ref{appdx:bijection}).
%
\begin{theorem}[{\bf Global optimality of stationary policies}]
	\label{theorem:global-opt}
	Suppose Assumption \ref{assumption:gen-para} holds, and $\F$ is a concave, and continuous function defined in an open neighbourhood containing $\lambda(\Theta)$. 
	Let $\theta^*$ be a first-order stationary point of problem \eqref{prob:dual-para-pi}, i.e.,\vspace{-2mm}
	\begin{equation}
	\label{defn:1st-order-condition}
	\exists u^*\in\hat{\partial}(\F\circ\lambda)(\theta^*),\quad\text{s.t.}\quad \langle u^*, \theta-\theta^*\rangle\leq0 \qquad\mbox{for}\qquad\forall \theta\in\Theta.
	\end{equation}
	Then $\theta^*$ is a globally optimal solution of problem \eqref{prob:dual-para-pi}.	
\end{theorem}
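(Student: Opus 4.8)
The plan is to cash in the ``hidden convexity'' literally: reparametrize in the occupancy measure $\lambda$, where the problem is concave, and transport the first-order stationarity of $\theta^*$ from $\Theta$ to $\lambda(\Theta)$ using the bijection and the Lipschitz inverse from Assumption \ref{assumption:gen-para}. Set $\lambda^* := \lambda(\theta^*)$. Because $\lambda(\cdot)$ is a bijection between $\Theta$ and $\lambda(\Theta)$, proving global optimality of $\theta^*$ amounts to showing $\F(\lambda^*)\ge \F(\lambda)$ for every $\lambda\in\lambda(\Theta)$. Fix an arbitrary $\theta\in\Theta$, put $\lambda' := \lambda(\theta)$, and for $\tau\in(0,1]$ form the segment $\lambda_\tau := (1-\tau)\lambda^* + \tau\lambda'$, which lies in $\lambda(\Theta)$ by convexity of $\lambda(\Theta)$ (Assumption \ref{assumption:gen-para}(i)). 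Let $g:=\lambda^{-1}$ and $\theta_\tau := g(\lambda_\tau)\in\Theta$, so that $\F(\lambda_\tau) = (\F\circ\lambda)(\theta_\tau)$.

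Next I would combine two estimates. First, concavity of $\F$ on $\lambda(\Theta)$ gives $\F(\lambda_\tau) - \F(\lambda^*) \ge \tau\big(\F(\lambda') - \F(\lambda^*)\big)$. Second, I use the definition of the Fr\'echet superdifferential: $u^*\in\hat\partial(\F\circ\lambda)(\theta^*)$ means $(\F\circ\lambda)(\vartheta) - (\F\circ\lambda)(\theta^*) \le \langle u^*,\vartheta-\theta^*\rangle + o(\|\vartheta-\theta^*\|)$ as $\vartheta\to\theta^*$, and the stationarity condition \eqref{defn:1st-order-condition} kills the linear term whenever $\vartheta\in\Theta$, hence in particular for $\vartheta=\theta_\tau$. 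The key quantitative point is that Assumption \ref{assumption:gen-para}(iii) controls the rate of approach: $\|\theta_\tau-\theta^*\| = \|g(\lambda_\tau) - g(\lambda^*)\| \le \ell_\theta\normm{\lambda_\tau-\lambda^*} = \tau\,\ell_\theta\normm{\lambda'-\lambda^*}$, so $\theta_\tau\to\theta^*$ as $\tau\downarrow 0$ (making the superdifferential inequality applicable for small $\tau$) and, crucially, $o(\|\theta_\tau-\theta^*\|)=o(\tau)$. Therefore $\F(\lambda_\tau) - \F(\lambda^*) \le o(\tau)$.

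Putting the two estimates together yields $\tau\big(\F(\lambda') - \F(\lambda^*)\big) \le \F(\lambda_\tau) - \F(\lambda^*) \le o(\tau)$; dividing by $\tau>0$ and letting $\tau\downarrow 0$ gives $\F(\lambda')\le \F(\lambda^*)$. Since $\theta\in\Theta$ (equivalently $\lambda'\in\lambda(\Theta)$) was arbitrary, $\F(\lambda^*) = \max_{\theta\in\Theta}R(\pi_\theta)$, i.e.\ $\theta^*$ is globally optimal.

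I expect the only delicate point to be the nonsmoothness of $\F$: one must argue entirely through the superdifferential inequality rather than a gradient/chain-rule identity, and the argument only closes because Assumption \ref{assumption:gen-para}(iii) upgrades ``$\lambda_\tau$ moves linearly in $\tau$'' to ``$\theta_\tau$ moves at rate $O(\tau)$,'' which is exactly what makes the remainder term $o(\tau)$ rather than merely $o(1)$. (If $\F$ were differentiable, one could shortcut this by invoking the chain rule together with the first-order optimality condition of the concave problem in $\lambda$, but the route above needs no chain rule for $\hat\partial$.) A minor bookkeeping item is to confirm that $\theta_\tau\in\Theta$ and that the stationarity inequality is being applied along an admissible direction, both of which follow from $\lambda_\tau\in\lambda(\Theta)$ and $g(\lambda(\Theta))=\Theta$.
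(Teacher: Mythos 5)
Your proposal is correct, and it reaches the conclusion by a genuinely different route than the paper. The paper first invokes the superdifferential chain rule to write $u^*=[\nabla_\theta\lambda(\theta^*)]^\top w^*$ with $w^*\in\partial \F(\lambda^*)$, then uses the Lipschitz continuity of the Jacobian $\nabla_\theta\lambda$ (Assumption \ref{assumption:gen-para}(ii)) to bound the linearization error $\|\lambda(\theta)-\lambda(\theta^*)-\nabla_\theta\lambda(\theta^*)(\theta-\theta^*)\|$, arriving at the quantitative intermediate inequality $\langle w^*,\lambda-\lambda^*\rangle\leq \frac{L_\lambda\ell_\theta^2}{2}\|w^*\|\,\normm{\lambda-\lambda^*}^2$ for all $\lambda\in\lambda(\Theta)$; it then performs the same segment substitution $\lambda\mapsto(1-\alpha)\lambda^*+\alpha\lambda$ and lets $\alpha\downarrow 0$ to kill the quadratic term, concluding $\langle w^*,\lambda-\lambda^*\rangle\leq0$ and invoking concavity of the problem in $\lambda$. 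You instead never unpack $u^*$: you apply the defining inequality of the Fr\'echet superdifferential to the composition $\F\circ\lambda$ along the pulled-back segment $\theta_\tau=g(\lambda_\tau)$, use stationarity \eqref{defn:1st-order-condition} to discard the linear term, and use Assumption \ref{assumption:gen-para}(iii) to turn the $o(\|\theta_\tau-\theta^*\|)$ remainder into $o(\tau)$, with concavity of $\F$ along the segment closing the argument. What each buys: your argument is more elementary and uses strictly fewer hypotheses — it needs neither the chain rule for $\hat\partial$ (and the attendant regularity, e.g.\ local Lipschitzness of $\F$ and differentiability of $\lambda$) nor Assumption \ref{assumption:gen-para}(ii) at all, so it is slightly more general; the paper's route, in exchange, produces an explicit supergradient $w^*\in\partial\F(\lambda^*)$ and a quantitative variational-inequality-type bound in $\lambda$-space, objects that are reusable and align with the paper's narrative of transferring optimality conditions to the occupancy-measure problem. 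The hidden-convexity core — the segment in $\lambda(\Theta)$ pulled back through $g$ with the $O(\alpha)$ (resp.\ $O(\tau)$) contraction from Assumption \ref{assumption:gen-para}(iii) — is the same in both. Your bookkeeping points ($\lambda_\tau\in\lambda(\Theta)$ by convexity, $\theta_\tau\in\Theta$ by bijectivity, and the trivial case $\lambda'=\lambda^*$) are all handled correctly.
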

%
%
Theorem \ref{theorem:global-opt} provides conditions such that, despite of nonconvexity, local search methods can find the global optimal policies. 	Since we aim at general utilities, we naturally separated out the convex and non-convex maps in the composite objective
	and our conditions for optimality rely on the properties of these.
In a recent paper, 
	\citet{BhaRu20} proposed some sufficient conditions under which a result similar to Theorem~\ref{theorem:global-opt} holds in the setting of the standard, cumulative total reward criterion. Their conditions are {\em (i)} the policy class is closed under (one-step, weighted) policy improvement
	and that {\em (ii)} all stationary points of the one-step policy improvement map are global optima of this map. 
	It remains for future work to see the relationship between our conditions and these conditions: They appear to have rather different natures.



\subsection{Convergence analysis} 
Now we analyze the convergence rate of the policy gradient scheme \eqref{defn:grad-proj} for general utilities. 
\begin{assumption}
	\label{assumption:ncvx-Lip}
	There exists $ L>0$ such that the policy gradient $\nabla_{\theta }R(\pi_{\theta})$ is $L$-Lipschitz. 
\end{assumption}


The objective $R(\pi_{\theta})$ is nonconvex in $\theta$, so one might expect that gradient schemes converge to stationary solutions at a standard $\cO(1/\sqrt{t})$ convergence rate \cite{shapiro2014lectures}. 
Remarkably, the policy optimization problem admits a convex nature if we view it in the space of $\lambda$, as long as $F$ is concave.
By exploiting this hidden convexity, we establish an $\cO(1/t)$ convergence rate for solving RL with general utilities. Further, we show that, when the utility $F$ is strongly concave, the gradient ascent scheme converges to the globally optimal policy exponentially fast. 
\begin{theorem}[{\bf Convergence rate of parameterized policy gradient iteration}]
	\label{theorem:iteration complexity-gen}
	Let Assumptions \ref{assumption:gen-para} and \ref{assumption:ncvx-Lip} hold. Denote $D_\lambda \!:=\! \max_{\lambda,\lambda'\in\lambda(\Theta)} \normm{\lambda-\lambda'}$ as defined in Assumption \ref{assumption:gen-para}(iii). Then the policy gradient update \eqref{defn:grad-proj} with $\eta = 1/L$ satisfies for all $k$\vspace{-2mm}
	\begin{equation*}
	R(\pi_{\theta^*}) \!-\! R(\pi_{\theta^k})\leq \frac{4L\ell_{\theta}^2D_\lambda^2}{k+1}.
	\end{equation*}
	Additionally, if $F(\cdot)$ is $\mu$-strongly concave with respect to the $\normm{\cdot}$ norm, we have \vspace{-2mm}
	\begin{equation*}
	R(\pi_{\theta^*})\!-\!R(\pi_{\theta^k})\!\leq\! \Big(1\!-\!\frac{1}{1\!+\!L\ell^2_\theta/\mu}\Big)^k\!\left(R(\pi_{\theta^*})\!-\!R(\pi_{\theta^0})\right).
	\end{equation*}
\end{theorem}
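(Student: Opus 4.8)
The plan is to transport the iteration \eqref{defn:grad-proj} from the nonconvex $\theta$-space into the convex $\lambda$-space, run a standard ``descent lemma plus convexity'' argument there, and then pull the resulting bound back to $\theta$. Concretely, write $\lambda^k := \lambda(\theta^k)$ and $\lambda^* := \lambda(\theta^*)$. By Theorem~\ref{theorem:global-opt}, a first-order stationary $\theta^*$ is globally optimal, so $\lambda^*$ is the maximizer of the concave function $F$ over the convex set $\lambda(\Theta)$; hence $R(\pi_{\theta^*}) = F(\lambda^*) \geq F(\lambda)$ for all $\lambda \in \lambda(\Theta)$. The first step is to establish a one-step progress (ascent) inequality: using Assumption~\ref{assumption:ncvx-Lip} ($\nabla_\theta R$ is $L$-Lipschitz) together with the projected-gradient step with $\eta = 1/L$, the standard descent lemma gives $R(\pi_{\theta^{k+1}}) \geq R(\pi_{\theta^k}) + \frac{1}{2\eta}\|\theta^{k+1}-\theta^k\|^2 \geq R(\pi_{\theta^k})$, and more usefully, for any feasible $\theta$,
\begin{equation*}
R(\pi_{\theta^{k+1}}) \geq R(\pi_{\theta^k}) + \langle \nabla_\theta R(\pi_{\theta^k}), \theta - \theta^k\rangle - \frac{L}{2}\|\theta - \theta^k\|^2 - \frac{L}{2}\|\theta - \theta^{k+1}\|^2 + \text{(something)},
\end{equation*}
i.e.\ the usual inequality one gets by comparing the proximal step to an arbitrary competitor $\theta$.

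\textbf{Second step: choose the competitor via the hidden-convexity geometry.} Given the current point, I would pick the competitor $\theta$ so that $\lambda(\theta)$ lies on the segment from $\lambda^k$ toward $\lambda^*$, say $\lambda(\theta) = (1-s)\lambda^k + s\lambda^*$ for a step $s\in[0,1]$ to be optimized; such a $\theta$ exists by Assumption~\ref{assumption:gen-para}(i) (bijection onto convex $\lambda(\Theta)$). Then $\|\theta - \theta^k\| = \|g(\lambda(\theta)) - g(\lambda^k)\| \leq \ell_\theta \normm{\lambda(\theta) - \lambda^k} = s\,\ell_\theta\normm{\lambda^*-\lambda^k} \leq s\,\ell_\theta D_\lambda$ by Assumption~\ref{assumption:gen-para}(iii) and the definition of $D_\lambda$. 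Meanwhile, since $F$ is concave, $R(\pi_\theta) = F(\lambda(\theta)) \geq (1-s)F(\lambda^k) + sF(\lambda^*) = R(\pi_{\theta^k}) + s(R(\pi_{\theta^*}) - R(\pi_{\theta^k}))$; and $\langle \nabla_\theta R(\pi_{\theta^k}),\theta-\theta^k\rangle \geq R(\pi_\theta) - R(\pi_{\theta^k}) - \tfrac{L}{2}\|\theta-\theta^k\|^2$ fails in general for nonconvex $R$, so instead I plug the chosen competitor directly into the proximal-step comparison. Combining the progress inequality with these two estimates and writing $\delta_k := R(\pi_{\theta^*}) - R(\pi_{\theta^k})$ yields a recursion of the shape $\delta_{k+1} \leq \delta_k - s\,\delta_k + c\, s^2 L \ell_\theta^2 D_\lambda^2$ for a small constant $c$; optimizing over $s\in[0,1]$ gives $\delta_{k+1} \leq \delta_k - \frac{\delta_k^2}{4cL\ell_\theta^2 D_\lambda^2}$ (when $\delta_k$ is small) or geometric decrease (when $\delta_k$ is large).

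\textbf{Third step: solve the recursion.} The inequality $\delta_{k+1} \leq \delta_k - \delta_k^2/B$ with $B := 4L\ell_\theta^2 D_\lambda^2$ (and $\delta_k$ nonincreasing, by the ascent property from step one) is the classical recursion whose solution is $\delta_k \leq B/(k+1)$; I would verify this by the standard induction, dividing through by $\delta_k\delta_{k+1}$ to get $1/\delta_{k+1} \geq 1/\delta_k + 1/B$. This gives the stated $R(\pi_{\theta^*}) - R(\pi_{\theta^k}) \leq \frac{4L\ell_\theta^2 D_\lambda^2}{k+1}$, possibly after checking a clean $\delta_0 \leq B$ base case or absorbing the first few ``large-$\delta$'' steps into a constant. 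For the strongly concave case, the argument is the same but the concavity estimate is upgraded: $F((1-s)\lambda^k + s\lambda^*) \geq (1-s)F(\lambda^k) + sF(\lambda^*) + \tfrac{\mu}{2}s(1-s)\normm{\lambda^*-\lambda^k}^2$, and — crucially — $\mu$-strong concavity also gives a quadratic growth/error bound $\delta_k = F(\lambda^*) - F(\lambda^k) \leq \text{(gradient terms)}$, in particular one can lower-bound $\normm{\lambda^*-\lambda^k}^2$ is not needed; instead use $F(\lambda^*) - F(\lambda^k) \geq \tfrac{\mu}{2}\normm{\lambda^*-\lambda^k}^2$ to control the $s^2$ penalty $s^2\ell_\theta^2 L \normm{\lambda^*-\lambda^k}^2 \leq s^2 \ell_\theta^2 L \cdot \tfrac{2}{\mu}\delta_k$, turning the recursion into $\delta_{k+1} \leq \delta_k - s\delta_k + \tfrac{s^2}{2}L\ell_\theta^2\tfrac{2}{\mu}\delta_k = (1 - s + s^2 L\ell_\theta^2/\mu)\delta_k$, minimized over $s$ (take $s = \tfrac{1}{2L\ell_\theta^2/\mu}$ if interior, else $s=1$) to give the contraction factor $1 - \frac{1}{1 + L\ell_\theta^2/\mu}$, and iterating from $k=0$ yields the geometric bound.

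\textbf{Main obstacle.} The delicate point is the passage from the proximal/descent inequality in $\theta$-space to a usable recursion in $\delta_k$: because $R(\pi_\theta)$ is genuinely nonconvex, one cannot directly invoke convexity of the objective along the line in $\theta$; the whole trick is that the line is convex in $\lambda$, not in $\theta$, and the $\ell_\theta$-Lipschitz inverse map $g$ is exactly what converts the $\lambda$-space displacement bound $\normm{\lambda(\theta)-\lambda^k} \leq s D_\lambda$ into the $\theta$-space bound needed to control the quadratic term $\frac{L}{2}\|\theta-\theta^k\|^2$ in the descent lemma. Getting the constants and the case split (large vs.\ small $\delta_k$) to land exactly on $4L\ell_\theta^2 D_\lambda^2/(k+1)$ will require care, but is routine once the geometric setup is in place; I also need the monotonicity $\delta_{k+1}\leq\delta_k$ (from $\eta=1/L$ in the descent lemma) to make the recursion solvable.
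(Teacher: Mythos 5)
Your plan for the sublinear rate is essentially the paper's proof: combine the two-sided quadratic bound from Assumption \ref{assumption:ncvx-Lip} with the optimality of the proximal step to get $R(\pi_{\theta^{k+1}})\geq \max_{\theta\in\Theta}\{R(\pi_\theta)-L\|\theta-\theta^k\|^2\}$, choose the competitor $\theta_\alpha=g(\alpha\lambda(\theta^*)+(1-\alpha)\lambda(\theta^k))$, use concavity of $F$ plus $\|\theta_\alpha-\theta^k\|\leq\alpha\ell_\theta\normm{\lambda(\theta^*)-\lambda(\theta^k)}\leq\alpha\ell_\theta D_\lambda$, and solve the recursion $\delta_{k+1}\leq(1-\alpha)\delta_k+\alpha^2L\ell_\theta^2D_\lambda^2$ exactly as in \eqref{thm:ItrCmp-1}--\eqref{thm:ItrCmp-2-gen}, including the large/small $\delta_k$ case split. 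One correction to your reasoning: the inequality $\langle\nabla_\theta R(\pi_{\theta^k}),\theta-\theta^k\rangle\geq R(\pi_\theta)-R(\pi_{\theta^k})-\tfrac{L}{2}\|\theta-\theta^k\|^2$ does \emph{not} fail for nonconvex $R$ — it is the upper quadratic bound implied by $L$-Lipschitz gradients, no convexity needed — and you in fact need it (or its equivalent) to convert the linearized proximal comparison into a statement about $R(\pi_{\theta_\alpha})$; your ``plug the competitor directly into the proximal-step comparison'' with an unspecified ``(something)'' does not by itself supply that conversion. The paper uses precisely this smoothness inequality, so your argument is repaired simply by invoking it; but make sure the total quadratic penalty is $L\|\theta_\alpha-\theta^k\|^2$ (two applications of the descent lemma), not $\tfrac{L}{2}\|\theta_\alpha-\theta^k\|^2$, otherwise the constant $4L\ell_\theta^2D_\lambda^2$ does not come out.

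For the strongly concave case you take a genuinely different route: you discard the segment improvement $\tfrac{\mu}{2}\alpha(1-\alpha)\normm{\lambda(\theta^*)-\lambda(\theta^k)}^2$ and instead invoke quadratic growth at the constrained maximizer, $\delta_k\geq\tfrac{\mu}{2}\normm{\lambda(\theta^k)-\lambda(\theta^*)}^2$, to turn the penalty into a multiple of $\delta_k$. This is a valid way to get exponential convergence (quadratic growth does hold here, since $\lambda(\theta^*)$ maximizes the $\mu$-strongly concave $F$ over the convex set $\lambda(\Theta)$), but your constants do not land on the stated factor: with the correct penalty coefficient $L$ your recursion is $\delta_{k+1}\leq(1-s+\tfrac{2L\ell_\theta^2}{\mu}s^2)\delta_k$, whose optimal $s$ gives the contraction $1-\tfrac{\mu}{8L\ell_\theta^2}$, which is looser than the claimed $1-\tfrac{1}{1+L\ell_\theta^2/\mu}$ whenever $\mu$ is small relative to $L\ell_\theta^2$. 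The paper instead keeps the strong-concavity term along the segment and picks $\bar\alpha$ so that the two quadratic terms cancel in \eqref{thm:ItrCmp-2'}, giving $\delta_{k+1}\leq(1-\bar\alpha)\delta_k$ directly (even there, the exact cancellation actually occurs at $\bar\alpha=1/(1+2L\ell_\theta^2/\mu)$, so the stated constant is itself slightly optimistic). So your approach proves the same qualitative exponential rate, but to match the theorem's constant you should retain the $\tfrac{\mu}{2}\alpha(1-\alpha)\normm{\cdot}^2$ term rather than route through quadratic growth.
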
 
 The exponential convergence result of Theorem \ref{theorem:iteration complexity-gen} implies that, when a regularizer like Kullback-Leiber divergence is used, policy gradient method converges much faster. In other words, policy search with general utilities can actually be easier than the typical, cumulative-return problem.

Finally, we study the case where policies are not parameterized, i.e., $\theta=\pi$. The next theorem establishes a tighter convergence rate than what Theorem \ref{theorem:iteration complexity-gen} already implies.



\begin{theorem}[{\bf Convergence rate of tabular policy gradient iteration}]
\label{theorem:iteration complexity}
Let $\theta = \pi$ and $\lambda(\theta) = \Lambda(\pi)$. Let Assumption \ref{assumption:ncvx-Lip} hold and assume that $\xi$ is positive-valued. Then 
the iterates generated by \eqref{defn:grad-proj} with $\eta = 1/L$ satisfy for all $k\geq1$ that \vspace{-2mm}
\begin{equation*}
R(\pi^*) - R(\pi^k)\leq \frac{20L|\cS|}{(1-\gamma)^2(k+1)}\cdot\Big\|d^{\pi^*}_\xi/\xi\Big\|_\infty^2.
\end{equation*}
\end{theorem}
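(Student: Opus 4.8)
The plan is to re-run the hidden-convexity argument behind Theorem~\ref{theorem:iteration complexity-gen}, but to replace the generic constants $\ell_\theta, D_\lambda$ by a sharp geometric estimate tailored to the tabular parametrization $\theta=\pi$. The facts I will use are: $\F$ is concave and continuous on the compact flux polytope $\cL$, so a maximizer $\lambda^*$ exists with $\pi^*=\Pi(\lambda^*)$ and $R(\pi^*)=\F(\lambda^*)$; when $\xi$ is positive-valued, $\Lambda$ and $\Pi$ are mutually inverse bijections between $\das$ and $\cL$, so $\Lambda(\Pi(\lambda))=\lambda$ on $\cL$ (Appendix~\ref{appdx:bijection}); and Assumption~\ref{assumption:ncvx-Lip}, i.e.\ $R(\pi)=\F(\Lambda(\pi))$ is $L$-smooth in $\pi$.

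\emph{Step 1 (one-step recursion).} Put $a_k:=R(\pi^*)-R(\pi^k)$. Fix $t\in[0,1]$, set $\lambda_t:=(1-t)\lambda^{\pi^k}+t\lambda^*\in\cL$ and $\pi^k_t:=\Pi(\lambda_t)\in\das$. Then $\Lambda(\pi^k_t)=\lambda_t$, and concavity of $\F$ gives $R(\pi^k_t)\ge(1-t)R(\pi^k)+tR(\pi^*)$, i.e.\ $R(\pi^*)-R(\pi^k_t)\le(1-t)a_k$. On the other hand, the $L$-smooth descent inequality for iteration \eqref{defn:grad-proj} with $\eta=1/L$, combined with first-order optimality of $\pi^{k+1}$ tested against $\theta=\pi^k_t\in\Theta$ (the same three-point manipulation used in the proof of Theorem~\ref{theorem:iteration complexity-gen}), yields
\begin{equation*}
R(\pi^{k+1})\;\ge\;R(\pi^k_t)-L\,\|\pi^k_t-\pi^k\|^2 .
\end{equation*}
Adding the two bounds gives $a_{k+1}\le(1-t)a_k+L\,\|\pi^k_t-\pi^k\|^2$ for every $t\in[0,1]$.

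\emph{Step 2 (tabular geometric estimate).} Write $\mu^\pi(s):=\sum_a\lambda^\pi_{sa}$ and abbreviate $\mu^*(s):=\sum_a\lambda^*_{sa}$. Since $\lambda_t(s,\cdot)=(1-t)\lambda^{\pi^k}(s,\cdot)+t\lambda^*(s,\cdot)$, dividing by $\mu_t(s):=(1-t)\mu^{\pi^k}(s)+t\mu^*(s)$ shows $\pi^k_t(\cdot|s)$ is a per-state convex combination,
\begin{equation*}
\pi^k_t(\cdot|s)=\alpha_s\,\pi^k(\cdot|s)+(1-\alpha_s)\,\pi^*(\cdot|s),\qquad 1-\alpha_s=\frac{t\,\mu^*(s)}{\mu_t(s)} .
\end{equation*}
Every element of $\cL$ satisfies $\sum_a(I-\gamma P_a^\top)\lambda_a=\xi$, hence $\mu^\pi(s)\ge\xi_s$ for any occupancy measure; therefore $\mu_t(s)\ge(1-t)\xi_s+t\xi_s=\xi_s$, so $1-\alpha_s\le t\,\mu^*(s)/\xi_s$ for all $t\in[0,1]$ --- this is exactly where positivity of $\xi$ enters. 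Using $\mu^*(s)=\tfrac{1}{1-\gamma}d^{\pi^*}_\xi(s)$, $\|\pi^*(\cdot|s)-\pi^k(\cdot|s)\|_2^2\le2$, and $\sum_s(\mu^*(s)/\xi_s)^2\le\|\mu^*/\xi\|_\infty\sum_s\mu^*(s)/\xi_s\le|\cS|\,\|\mu^*/\xi\|_\infty^2$, we obtain
\begin{equation*}
\|\pi^k_t-\pi^k\|^2=\sum_s(1-\alpha_s)^2\,\|\pi^*(\cdot|s)-\pi^k(\cdot|s)\|_2^2\;\le\;\frac{c\,|\cS|}{(1-\gamma)^2}\Big\|d^{\pi^*}_\xi/\xi\Big\|_\infty^2\,t^2
\end{equation*}
for an absolute constant $c$.

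\emph{Step 3 (unrolling) and main obstacle.} Plugging Step~2 into Step~1, $a_{k+1}\le(1-t)a_k+Bt^2$ for all $t\in[0,1]$, with $B:=cL|\cS|(1-\gamma)^{-2}\|d^{\pi^*}_\xi/\xi\|_\infty^2$. Taking $t=1$ once gives $a_1\le B$; then for $k\ge1$, since $a_k\le2B$, the choice $t=a_k/(2B)$ is admissible and yields $a_{k+1}\le a_k-a_k^2/(4B)$, hence $1/a_{k+1}\ge 1/a_k+1/(4B)$, which telescopes to $a_k\le 4B/(k+1)$; tracking $c$ carefully produces the stated constant $20$. Steps~1 and~3 are routine given the machinery of Theorem~\ref{theorem:iteration complexity-gen}; the real work is the geometric estimate of Step~2, namely building a comparison policy $\pi^k_t$ that is simultaneously near-optimal in value (via the $\lambda$-space interpolation and concavity) and $O(t)$-close to $\pi^k$ in the \emph{Euclidean} norm used by the algorithm, with the closeness constant scaling only like $\sqrt{|\cS|}\,(1-\gamma)^{-1}\|d^{\pi^*}_\xi/\xi\|_\infty$. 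Controlling states rarely visited under $\pi^k$ (small $\mu^{\pi^k}(s)$) is precisely what forces the positivity assumption on $\xi$, through the uniform floor $\mu^\pi(s)\ge\xi_s$; and keeping the dependence on $|\cS|$ linear (rather than quadratic) in the sum over states requires distributing the mismatch coefficient $\|d^{\pi^*}_\xi/\xi\|_\infty$ across both factors of $(\mu^*(s)/\xi_s)^2$.
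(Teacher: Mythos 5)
Your proposal is correct, and its skeleton is the same as the paper's: interpolate in $\lambda$-space between $\Lambda(\pi^k)$ and $\Lambda(\pi^*)$, map back through $\Pi$ to get a feasible comparator $\pi_t^k$, combine the $L$-smooth ascent/optimality inequality for \eqref{defn:grad-proj} with concavity of $\F$ to get $a_{k+1}\le(1-t)a_k+L\|\pi_t^k-\pi^k\|^2$, and unroll. The genuine difference is in the geometric estimate. The paper bounds $\|\pi_\alpha-\pi^k\|$ by invoking its general Lipschitz-type bound for $\Pi$ (Proposition \ref{proposition:bijection}(iii), valid for arbitrary pairs of fluxes) and then estimating $\sum_a(\lambda^*_{sa}-\lambda_{sa})^2$ and $(\mu^*_s-\mu_s)^2$ crudely by $4\big((\mu^*_s)^2+\mu_s^2\big)$, together with $d^{\pi^k}_\xi(s)\ge(1-\gamma)\xi_s$, arriving at $\|\pi_\alpha-\pi^k\|^2\le \frac{5\alpha^2|\cS|}{(1-\gamma)^2}\|d^{\pi^*}_\xi/\xi\|_\infty^2$ and hence the constant $20$. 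You instead exploit that along the segment $\lambda_t=(1-t)\lambda^{\pi^k}+t\lambda^*$ the extracted policy is an exact per-state convex combination, $\pi^k_t(\cdot|s)=\alpha_s\pi^k(\cdot|s)+(1-\alpha_s)\pi^*(\cdot|s)$ with $1-\alpha_s=t\mu^*(s)/\mu_t(s)\le t\,\mu^*(s)/\xi_s$, which is where positivity of $\xi$ enters in both proofs (the paper uses the same floor, expressed as $d^{\pi^k}_\xi\ge(1-\gamma)\xi$). This exactness, combined with $\|\pi^*(\cdot|s)-\pi^k(\cdot|s)\|_2^2\le2$, gives $\|\pi^k_t-\pi^k\|^2\le \frac{2t^2|\cS|}{(1-\gamma)^2}\|d^{\pi^*}_\xi/\xi\|_\infty^2$, i.e.\ $c=2$, so your final constant is $8$ rather than $20$ — strictly sharper than the statement, which it therefore implies. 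Your unrolling ($t=1$ once to get $a_1\le B$, then $t=a_k/(2B)$, using monotonicity of $a_k$ to keep $t\le1$, and telescoping $1/a_{k+1}\ge1/a_k+1/(4B)$) differs cosmetically from the paper's manipulation of $\alpha_k=a_k/(2LD)$ but is equally valid. In short: same hidden-convexity route, but your per-state convex-combination lemma replaces the paper's generic Lipschitz bound on $\Pi$ and buys a cleaner argument and a better constant; the paper's route, in exchange, isolates a reusable property of $\Pi$ (Proposition \ref{proposition:bijection}) that also serves to verify Assumption \ref{assumption:gen-para}.
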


\paragraph{The case of cumulative rewards.} Let us consider the well-studied special case where $F$ is a linear functional, i.e., $R(\pi) \!=\! V^\pi$ [cf. \eqref{eq:value}] is the typical cumulative return. In this case, we have $L \!=\! \frac{2\gamma A}{(1-\gamma)^3}$ (\cite{agarwal2019optimality}). Now in order to obtain an $\epsilon$-optimal policy $\bar{\pi}$ such that $V^{\pi^*}\!\!\! -\! V^{\bar\pi}\leq \epsilon$, the gradient ascent update requires {\footnotesize$\cO\Big(\frac{SA }{(1-\gamma)^5\epsilon}\cdot\big\|d^{\pi^*}_\xi/\xi\big\|_\infty^2\Big)$} iterations according to Theorem \ref{theorem:iteration complexity}. 
This bound is strictly smaller than the {\footnotesize$\cO\Big(\frac{SA }{(1-\gamma)^6\epsilon^2}\big\|d^{\pi^*}_\xi/\xi \big\|_\infty^2 \Big)$} iteration complexity proved by \cite{agarwal2019optimality} for tabular policy gradient. The improvement from $O(1/\epsilon^2)$ to $(1/\epsilon)$ comes from the fact that, although the policy optimization problem is nonconvex, our analysis exploits its hidden convexity in the space of $\lambda$.

\section{Experiments}\label{sec:experiments}

\vspace{-2mm}
Now we shift to numerically validating our methods and theory on OpenAI Frozen Lake \cite{brockman2016openai}. Throughout, additional details may be found in Appendix \ref{apx_experiments}.


\begin{figure}[h!]
  \hspace{-2mm}
\centering
\begin{minipage}[t]{.33\textwidth}
  \centering
  \hspace{-1cm}
		{\includegraphics[scale=0.25]{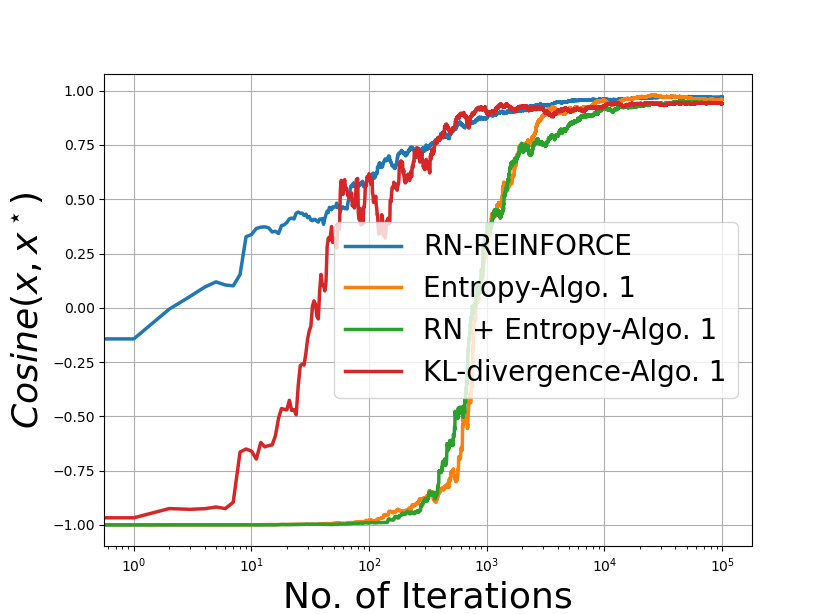}\label{fig:gradient_Estimates}}
		 \hspace{-1.1cm}\vspace{4mm}
		\caption{\scriptsize {\bf PG estimation via Alg. \ref{alg:MCDPG}}
		Cosine similarity between PG estimates $\hat{x}_t $ generated by Algorithm \ref{alg:MCDPG} after $t$ samples and the ground truth ${x}^\star$, which consistently converges to near $1$ across different instances (E.g. \eqref{eg:cmdp} - \eqref{eg:KL}) when $t$ becomes large. For comparison, we also include the convergence of PG estimates from REINFORCE for cumulative returns. }
  \label{fig:gradient_estimate}
\end{minipage}
%
  \hspace{1mm}
\begin{minipage}{.63\textwidth}
\centering
\vspace{0.2cm}
\hspace{-1.1cm}
		\subfigure[Entropy vs.  \# episodes ]{\includegraphics[scale=0.25]{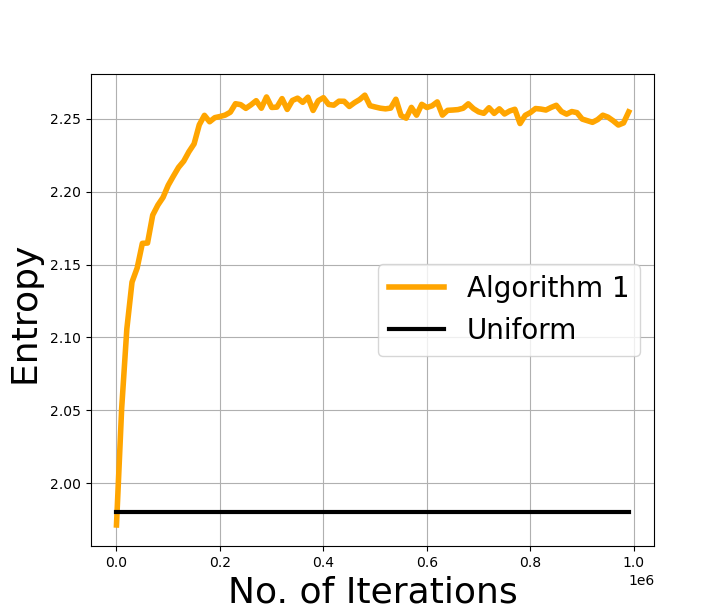}\label{fig:entropy_cost}}
\hspace{-.2cm}
		\subfigure[\scriptsize{World \& occupancy dist. (Entropy)}	]{\includegraphics[scale=0.2]{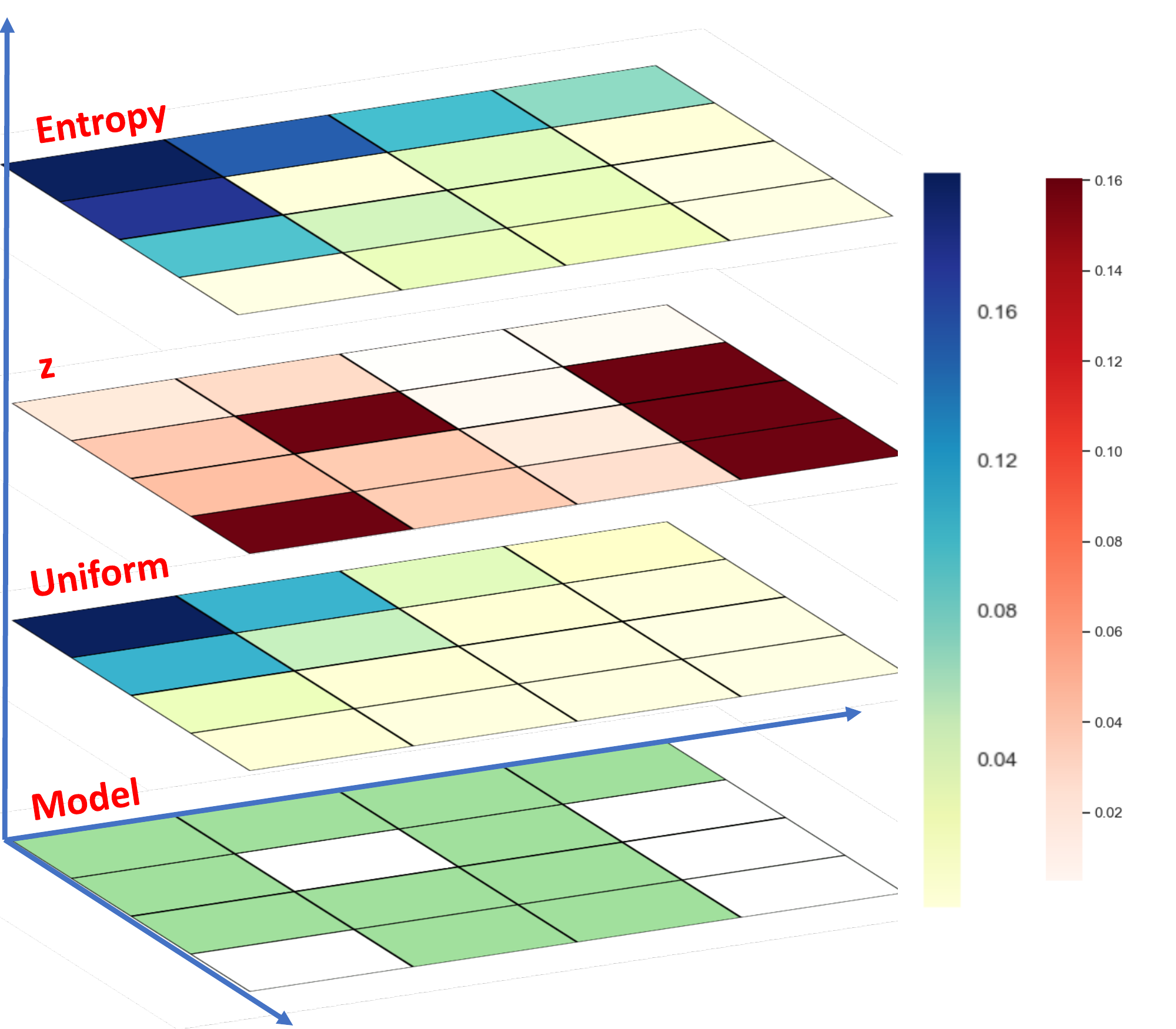}\label{fig:entropy_occupancy}}
		\hspace{-1cm}\vspace{-2mm}
		\caption{\scriptsize  {\bf Results for maximum entropy exploration}:  In Fig. \ref{fig:entropy_cost}, to quantify exploration, we present the entropy of flux $\lambda$ over training index $n$ for our approach, as compared with the entropy of a uniform random policy. Fig. \ref{fig:entropy_occupancy}(bottom) visualizes the world model (holes in the lake have null entropy, as they terminate the episode), the lower middle layer displays the occupancy measure associated with a uniformly random policy, the upper-middle visualizes the pseudo-reward $z^*$ defined by the Fenchel dual of the entropy \eqref{prob:exploration-1} -- see Appendix \ref{appdx:special_cases}. Lastly, on top we visualize the occupancy measure associated with the max entropy policy, which better covers the space than a uniformly random policy. 
		}\label{fig:entropy}
  \label{fig:test2}
\end{minipage}\vspace{-5mm}
\end{figure}
%

\noindent {\bf Policy Gradient (PG) Estimation.}  First we investigate the use of Theorem \ref{thm-PG} and Algorithm \ref{alg:MCDPG} (Appendix \ref{appdx:alg-PG-estimation}) for PG estimation, for several instances of the general utility. We also compare it with the gradient estimates computed by REINFORCE for cumulative returns. 
Specifically, in Figure \ref{fig:gradient_estimate} we illustrate the convergence of gradient estimates, measured using the cosine similarity between $x_n$ (running estimate based on $n$ episodes) and the true gradient $x^*$ (which is evaluated using brute force Monte Carlo rollouts -- see Appendix \ref{apx_experiments_benchmark}). The cosine similarity converges to $1$ across different instances, providing evidence that Algorithm \ref{alg:MCDPG} yields consistent gradient estimates for general utilities. 

\noindent {\bf PG Ascent for Maximal Entropy Exploration.}  
Next, we consider maximum entropy exploration \eqref{prob:exploration-1} using algorithm \eqref{defn:grad-proj}, with softmax parametrization. First, we display the evolution of the entropy of the normalized occupancy measure over the number of episodes in Fig. \ref{fig:entropy_cost}. Then, we visualize the world model in Fig. \ref{fig:entropy_occupancy}(bottom). Moreover, the lower middle is the occupancy measure associated with a uniformly random policy, the upper-middle layer visualizes the "pseudo-reward" $z^*$ computed as the Fenchel dual of the entropy \eqref{prob:exploration-1} -- see Appendix \ref{appdx:special_cases}, which is null at the holes and positive otherwise. We use a different color to denote that its values are not likelihoods. The occupancy measure obtained by policy gradient ascent with gradient estimated by Algorithm \ref{alg:MCDPG}  at the end of training is in Figure \ref{fig:entropy_occupancy}(top) -- observe the maximal entropy policy achieves significantly better coverage of the state space than the uniformly random policy. 

\begin{figure}[h]
\centering
\vspace{-.5cm}
	\hspace{-1cm}
	\subfigure[\scriptsize{World \& occupancy dist. (CMDP)}]{\includegraphics[scale=.25]{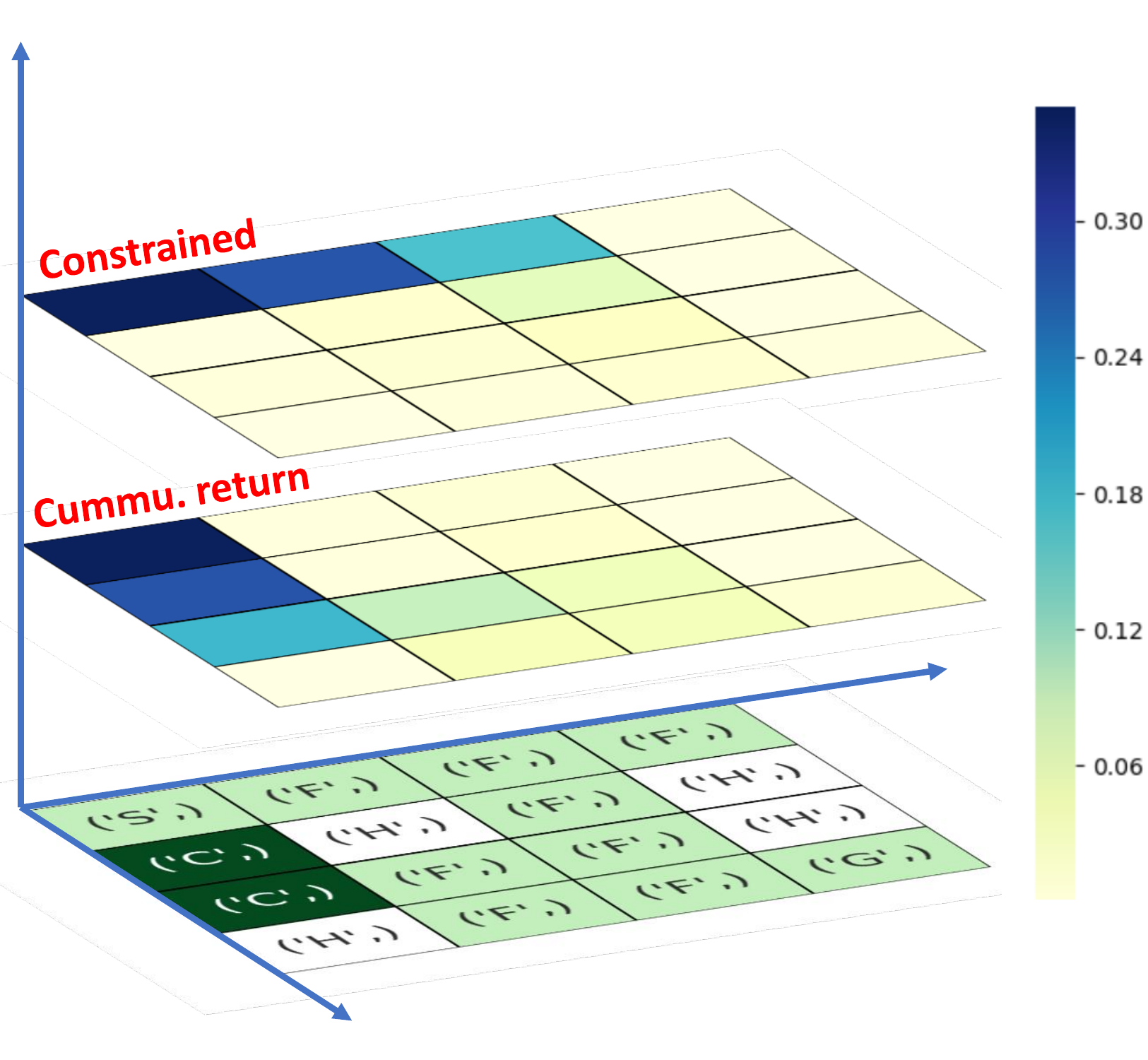}\label{fig:cmdp_occupancy}}\hspace{-1mm}
			\subfigure[Reward vs. \# episodes]{\includegraphics[scale=0.25]{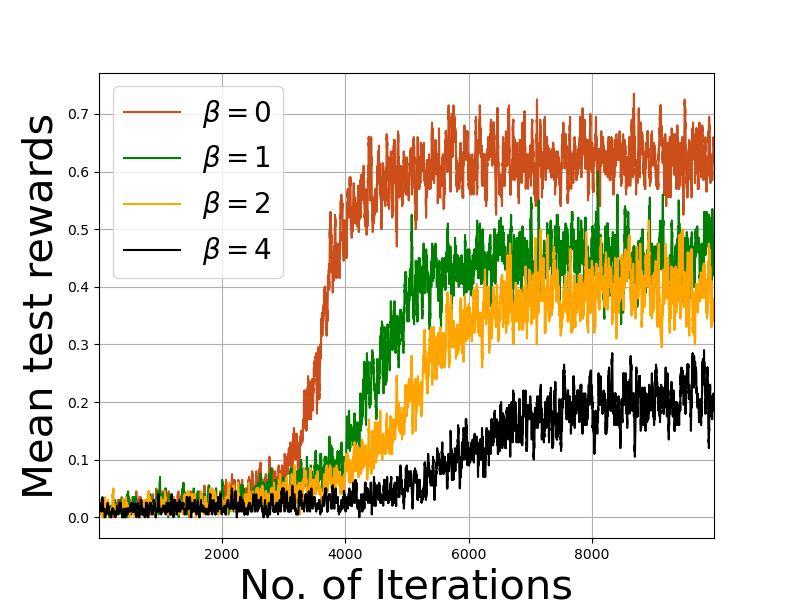}\label{fig:CMDP_value}}\hspace{-6mm}
		\subfigure[Cost  vs.  \# episodes ]{\includegraphics[scale=0.25]{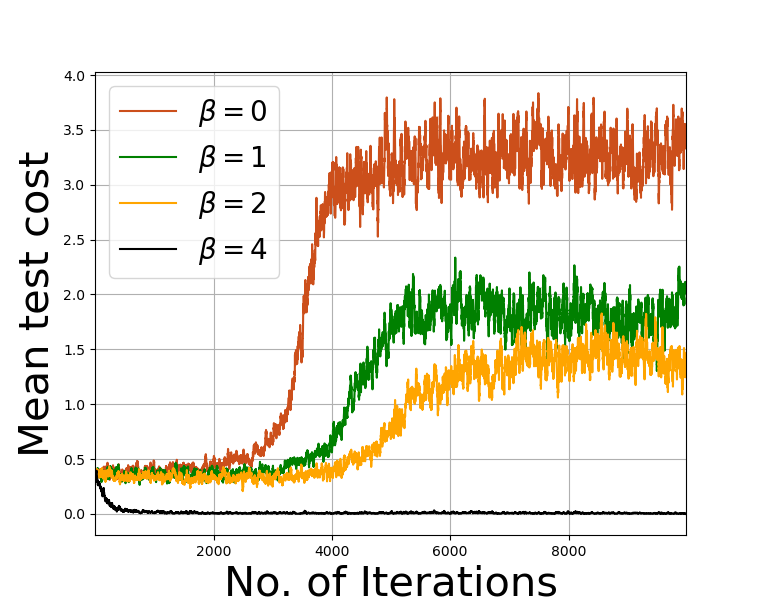}\label{fig:CMDP_cost}}\hspace{-1.4cm}
		\vspace{-.25cm}
				\caption{\scriptsize {\bf Results for avoiding obstacles.} Fig. \ref{fig:cmdp_occupancy}(bottom) depicts the world model of OpenAI Frozen Lake with augmentation to include costly states, e.g., obstacles:  C represents costly states, F is the frozen lake, H is the hole, and G is the goal. We consider softmax policy parameterization, and visualize the  occupancy measure associated with REINFORCE for the cumulative return \eqref{eq:value} in the middle layer, and the relaxed {\bf CMDP} \eqref{prob:dual-cmdp} via a {\bf logarithmic barrier} \eqref{eq:logarithmic_barrier} at the top.\medskip
				 The policy obtained via barriers avoids visiting costly states, in contrast to the middle.
				%
				Fig. \ref{fig:CMDP_value} and Fig. \ref{fig:CMDP_cost} show the reward/cost accumulated during test trajectories over training index for Algorithm \ref{alg:MCDPG}. Observe that the reward/cost curves behave differently as the penalty parameter $\beta$ varies: observe that without any constraint imposition (which implies $\beta=0$ in red), one achieves the highest reward, but incurs the most costs, i.e., hits obstacles most often. Larger $\beta$ imposes more penalty, and hence $\beta=4$ incurs lowest cost and lowest reward. Other instances are also shown for $\beta=1$ and $\beta=2$. \vspace{-3mm}
				}
		 	\label{fig:cmdp_convergence}
\end{figure}

\noindent {\bf PG Ascent for Avoiding Obstacles.}  
Suppose our goal is to navigate the Frozen Lake and avoid obstacles. 
We consider imposing penalties to avoid costly states [cf. \eqref{prob:dual-cmdp}] via a logarithmic barrier \eqref{eq:logarithmic_barrier}, and by applying variational PG ascent, we obtain an optimal policy whose resulting occupancy measure is depicted in Fig. \ref{fig:cmdp_occupancy}(top). 
For comparison, we consider optimizing the standard expected cumulative return \eqref{eq:value}, whose state occupancy measure is given in Fig. \ref{fig:cmdp_occupancy}(middle).
Observe that imposing log penalties yields policies whose probability mass is concentrated away from obstacles (dark green).
Further, we display in Fig. \ref{fig:cmdp_convergence} the reward \ref{fig:CMDP_value} and cost \ref{fig:CMDP_cost} accumulation during test trajectories as a function of the iteration index for the PG ascent \eqref{defn:grad-proj} for the cumulative return \eqref{eq:value} as compared with a logarithmic barrier imposed to solve \eqref{prob:dual-cmdp} for different penalty parameters $\beta$. 

\section{Broader Impact}
While RL has a great number of potential applications, our work is of foundational nature and as such, the application of the ideas in this paper can have both broad positive and negative impacts. However, this paper is purely theoretical, as we do not aim at any specific application, there is nothing we can say about the most likely broader impact of this work that would go beyond speculation.

	\bibliographystyle{plainnat}
	\bibliography{bibliography}
%
	\newpage
	\appendix
\clearpage\newpage\onecolumn
\appendix
\section*{\centering Supplementary Material for \\``Variational Policy Gradient Method\\ for Reinforcement Learning with General Utilities"}

\section{Related Work}
\label{supp-relatedwork}
We provide a more extension discussion for the context of this work. Firstly, when closed-form expressions for the optimizer of a function are unavailable, solving optimization problems requires iterative schemes such as gradient ascent \cite{nocedal2006numerical}. Their convergence to global extrema is predicated on concavity and the tractability of computing ascent directions. When the objective takes the form of an expected value of a function parameterized by a random variable, stochastic approximations are required \cite{robbins1951stochastic,kiefer1952stochastic}. The PG Theorem mentioned above gives a specific form for obtaining ascent directions with respect to a parameterized family of stationary policies via trajectories in a Markov decision process, when the objective is the expected cumulative return \cite{sutton2000policy}, which gives rise to the REINFORCE algorithm. 
	
	The convergence of policy search for the expected cumulative return has been studied extensively in recent years. Under general parameterizations the problem becomes nonconvex. Hence, early work focused on asymptotic convergence to stationarity \cite{pirotta2015policy} by invoking dynamical systems \cite{borkar2008stochastic}. In actor-critic \cite{konda1999actor,konda2000actor}, one replaces the Monte Carlo rollout of the Q function with a temporal difference estimator \cite{sutton1988learning}, and its asymptotic stability follows similar logic \cite{bhatnagar2009natural}. Another line of work focused on only on per-step value increase, i.e., policy improvement bounds \cite{pirotta2013adaptive,pirotta2015policy}. Recent interest has been on structural results that yield convergence to global optimality: when state transitions are linear \cite{fazel2018global,bu2019lqr}), the policy parameterization is direct (tabular) \cite{BhaRu19,agarwal2019optimality}, function approximation error can be quantified \cite{KaLa02,liu2019neural}. Clever step-size rules have also been designed to ensure convergence to second-order stationary points under general settings \cite{Zhang_preprint}. 
	
These results, however, are restricted to the expected cumulative return, a linear functional of the state-action occupancy measure, and hence do not apply to general concave functionals of the form considered in this work. Early works in operations research consider nonstandard utilities \cite{HuKa94}, motivated by certain variance-penalizations which may also be written as concave functionals of occupancy measures \cite{FiKaLe89}. Similar in spirit to this work is \cite{Ka94I}, as it also puts occupancy measures at the center of its conceptual development. These works develop dynamic programming approaches for tabular settings, and hence are not scalable to problems with large spaces. More recently, maximizing the entropy of the state visitation distribution has been considered \cite{hazan2018provably}, a special case of the concave utilities we study. Moreover, the authors develop a model-based iteratively policy update, which requires explicit knowledge of the transition probability matrix. By contrast, in this work we prioritize model-free approaches for possibly large spaces via the fusion of direct policy search and parameterization over a family of policies.

\section{Supplementary materials of Section \ref{sec:pg-estimate}}
\subsection{A Monte Carlo Algorithm for solving \eqref{eq-esp}}
\label{appdx:alg-PG-estimation}
Note that any algorithm that solves problem \eqref{eq-esp} will serve our purpose. Therefore, we provide a Monte Carlo method that alternates between stochastic primal and dual updates as an example, stated in Algorithm \ref{alg-PG-estimation}, in which the projection operator onto the set $\{z:\|z\|_\infty\leq \ell_F\}$ is denoted as $\Proj_{\ell_F}\{z\}$. For any $z$, $z' = \Proj_{\ell_F}\{z\}$ is defined as 
$$z'_i = \begin{cases}
-\ell_F, & \mbox{if } z_i\in(-\infty,-\ell_F),\\
z_i, & \mbox{if } z_i\in[-\ell_F,\ell_F],\\
\ell_F, & \mbox{if } z_i\in(\ell_F,+\infty).
\end{cases}$$
\begin{algorithm}[h]
	\caption{Monte Carlo Variational Policy Gradient Estimation \label{alg-PG-estimation}}
	\begin{algorithmic}
		\label{alg:MCDPG}
		\REQUIRE a differentiable policy parametrization $\pi_{\boldsymbol{\theta}}$, stepsizes $\alpha_t,\beta_t>0$, initial points $\boldsymbol{x}=0$, $\mathbf{z}=0$. A constant $\ell_F$.
		\STATE \textbf{policy parameter $\boldsymbol{\theta}\in\mathbb{R}^d$} 
		\STATE  Generate episodes $\zeta_i=\{(s_k, a_k)\}$ from $i=1,\cdots, n$ following $\pi_\theta(a|s)$
		\STATE  For $t = 0,1,2,...$ until some stopping criterion is met: 
		\STATE \hspace{0.5 cm} \textbf{Sample} $(s_k,a_k)$ from the data set
		\STATE \hspace{0.5	cm} \textbf{Update} \vspace{-0.35cm}
		\begin{align}
		& \mathbf{z}^{t+1} \leftarrow \Proj_{\ell_F}\left\{\mathbf{z}^t - \frac{\alpha_t}{1-\gamma} \mathbf{1}_{s_k,a_k} + \alpha_t \nabla F^*(\mathbf{z}^t)  \right\}
		\label{alg-1}\\
		&\boldsymbol{x}^{t+1}\leftarrow  \boldsymbol{x}^t + \beta_t\left[ \sum_{a\in\cA}Q^{\pi_\theta}(s_k,a; \mathbf{z}^t)\cdot\nabla_\theta\pi_\theta(a|s_k) - \boldsymbol{x}^{t}\right]
		\label{alg-2}
		\end{align} 	
		\STATE \textbf{Output:} the last iterate $\boldsymbol{x}$
	\end{algorithmic}
\end{algorithm}
It is worth noting that 
we have we omit the term $\delta\nabla\tilde V(\theta;z)^\top x$
when computing the gradient w.r.t. $z$ in \eqref{alg-1}. Note that for the iterates $x^t$ are all well bounded, then $\delta\nabla\tilde V(\theta;z^t)^\top \!x^t\! =\! \cO(\delta)$, which is negligible when $\delta\to0$.

\subsection{Special cases of policy gradient computation}
\label{appdx:special_cases}
We give several examples of the policy gradient for special cases of the general utility in \eqref{prob:dual-para-pi}.


\paragraph{Linear utility} 
The simplest, where $F(\lambda) = \langle \lambda, r\rangle$ [cf. \eqref{eq:value}], we have $F^*(z) = 0$ if $z= c \cdot r$ for some scalar $c$ and $F^*(z)=\infty$ otherwise. In this case $z^*=r$ and Theorem \ref{thm-PG} recovers the known policy gradient theorem for the risk-neutral MDP \eqref{eq:value}, that is
$\nabla_{\theta} R(\pi_{\theta})= { \nabla_{\theta} V(\theta;r)} $.

\paragraph{Constrained MDPs}
By contrast, in Example \ref{eg:cmdp}, i.e., when a constraint $\mathbb{E}^{\pi}\left[\sum^{\infty}_{t=0} \gamma^t c(s_t,a_t) \right] \leq C$ on the accumulation of costs $c(s_t,a_t)$ is present, and we may enforce it approximately with a $\log$ barrier by defining 
%
\begin{equation}\label{eq:logarithmic_barrier}
R(\pi_\theta) = \langle r,\lambda(\theta)\rangle + \beta \log\left(C-\langle c,\lambda(\theta)\rangle\right) = V(\theta;r) + \beta \log\left(C-V(\theta;c)\right),
\end{equation}
where $\beta$ is a regularization parameter,
in which case the policy gradient takes the form
$$\nabla R(\pi_\theta) = \nabla_{\theta}V(\theta;r) - \beta \frac{\nabla_{\theta}V(\theta;c)}{C-V(\theta;c)}.$$
Estimating the policy gradient $R$ of constrained MDP consists of estimating two policy gradients $\nabla_{\theta}V(\theta;c)$ and $\nabla_\theta V(\theta;r)$ and accumulated reward $V(\theta;c)$. 

\paragraph{Minimum eigenvalue}
For case \eqref{prob:exploration-2}, define $\Phi(\lambda^{\pi_{\theta}}) = \sum_{s,a} \lambda_{sa}^{\pi_{\theta}} \cdot\phi(s,a)\phi(s,a)^{\top}$. Then  $\Phi(\lambda^{\pi_{\theta}})$ is symmetric and positive semidefinite, since $\lambda^{\pi_{\theta}}\geq0$. By using Rayleigh principle, we have 
\begin{eqnarray}
R(\pi_{\theta}) = \sigma_{min}\left(\Phi(\lambda^{\pi_{\theta}}) \right) = \min_{\|u\| = 1} u^\top \Phi(\lambda^{\pi_{\theta}}) u = \min_{\|u\| = 1}\sum_{s,a}\lambda_{sa}^{\pi_{\theta}}|\phi(s,a)^{\top} u |^2.
\end{eqnarray}
which is the minimum of a family of linear function in $\lambda$. Let $v^{(1)},...,v^{(k)}$ be a group of orthonormal bases of the eigenspace of $\Phi(\lambda^{\pi_{\theta}})$ corresponding to the minimum eigenvalue. Then define $k$ vectors as $r^{(i)}(s,a) = |\phi(s,a)^{\top} v^{(i)} |^2, \forall s,a$, $i = 1,...,k$. Then the Fr\'echet superdifferential of $R$ at $\theta$ is 
$$\hat\partial_{\theta} R(\pi_{\theta}) = \left\{\nabla_{\theta} V(\theta; r): r\in\mathrm{conv}(r^{(1)},...,r^{(k)})\right\},$$
where $\mathrm{conv}(\cdot)$ denotes the convex hull of a group of vectors. When the multiplicity of the minimum eigenvalue is 1, then $R(\cdot)$ is differentiable at this point and $\hat{\partial}_\theta R(\cdot) = \{\nabla_\theta R(\cdot)\}$.

\paragraph{Entropy maximization}
For the entropy \eqref{prob:exploration-1}, its Fenchel dual takes the form
$$F^*(z) 
= -\sum_{sa}\exp\big\{-\frac{z_{sa}}{1-\gamma}-1\big\}.$$ 
%

\paragraph{Learning to mimic a distribution}  For the KL divergence to a prior $\mu$ in \eqref{risk_not_r}, we have
$$F^*(z) = \begin{cases}
-\sum_{s}\mu_s\exp\big\{-\frac{z_{s1}}{1-\gamma}-1\big\} & \mbox{if} \quad z_{sa_1} = z_{sa_2}\,\,\,\,\forall s\in\cS, a_1,a_2\in\cA,\\
-\infty & \mbox{otherwise.}
\end{cases}$$


\section{Additional Details of Experiments}\label{apx_experiments}
\subsection{Details of Environment}
	OpenAI Frozen Lake is a finite-state action problem. The standard state consists of $\{S,F,H,G\}$, to which we add an additional state $C$ which is visualized in Fig. \ref{fig:cmdp_occupancy}.  At each step, an agent selects an action $a\in\cA$, which consists of one of four directions (up, down, left, right), which may be enumerated as $\{1,\dots,4\}$. The reward is null at all Frozen $F$ spaces, the start $S$ location, and the Holes $H$ in the lake. If the agent enters a hole, the episode terminates, and hence null reward is accumulated for this trajectory. The only positive reward is $1$ and may be obtained when reaching the goal state $G$. Our augmentation is that costly states $C$ have been added, which incur reward $-0.4$ to represent, for instance, obstacles. We note that only for the cumulative return and its constrained variants, or other utilities that are defined in terms of the problem's inherent reward do these quantities matter. That is, for the entropy maximization problem, there is no reward associated with any state in the usual sense. The MDP transition model is unknown and defined by the OpenAI environment, a simulation oracle that provides state-action-reward triples. 

Throughout all experiments, for simplicity, we considered a softmax policy parameterization. For this parameterization, the policy takes the form $\pi_{\theta}(s\mid a) = e^{\theta_{sa}}/(\sum_{a'} e^{\theta_{sa'}})$ for $\theta\in\RR^{|\mathcal{S}|\times|\mathcal{A}|}$. For the Frozen lake environment in this paper, we have $|\mathcal{S}|=16$ and $|\mathcal{A}|=4$. 

\subsection{Computing the True Policy Gradient}\label{apx_experiments_benchmark}
 For comparison, we compute the true policy gradient by using a baseline approach based on the chain rule and a variant of REINFORCE \cite{sutton2000policy}: the second factor on the right-hand side of \eqref{chain-rule} is exactly computed using REINFORCE $\nabla_\theta\lambda_{sa}(\theta)$, whereas the first, $\frac{\partial \F(\lambda(\theta))}{\partial \lambda_{sa}}$, is computed using an additional Monte Carlo rollout. We denote as $x^*$ the result of this procedure and use it as ground truth. In Figure \ref{fig:convergence} we display the evolution of its norm difference $\|\hat{x}_n^\star-\hat{x}_{n-1}^\star\|$ as the sample size $n$ increases. That it approaches null with the sample size implies that this brute force Monte Carlo variant of REINFORCE is convergent, and hence is a reasonable benchmark comparator.

\begin{figure}[h]
\vspace{-4mm}
\centering
	\subfigure[Convergence of $x^\star$]{\includegraphics[scale=0.4]{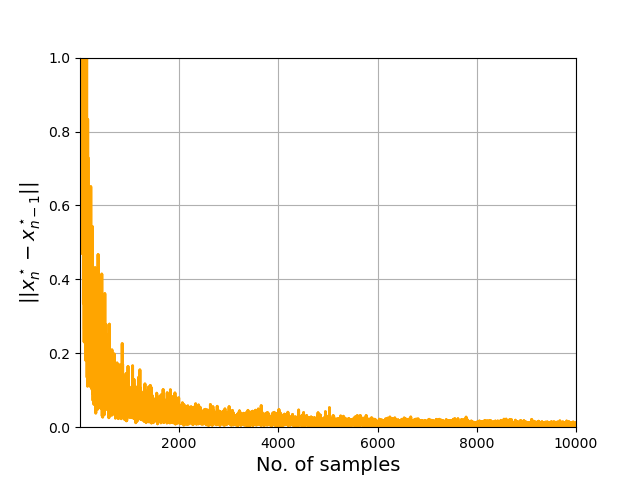}\label{fig:convergence}}
	%
		\caption{	Fig. \ref{fig:convergence} displays the convergence of a generalization of REINFORCE-based gradient estimator for \eqref{chain-rule} in terms of its difference $\|\hat{x}_n^\star-\hat{x}_{n-1}^\star\|$ as the number of processed trajectories $n$ increases, which converges to null, certifying $\hat{{x}}_n^\star$ as a baseline.}
\end{figure}

\subsection{Details about Maximum Entropy Exploration}\label{subsec:entropy}
For this problem instance, i.e., \eqref{prob:exploration-1} from Example \ref{eg:explore}, we also consider the state space defined by Frozen Lake, but note that the reward as defined by the environment is now a moot point. This is because each state contributes positive entropy, with the exception of the holes in the lake, which terminate the episode. We visualize this setup at the bottom layer of Fig. \ref{fig:entropy_occupancy}. The lower middle layer visualizes the occupancy measure associated with a uniform policy. Moreover, the upper middle layer visualizes the ``pseudo-reward" $z$ for each point in the state space. This quantity is computed in terms of the Fenchel dual of the entropy -- see Appendix \ref{appdx:special_cases}, and the occupancy measure associated with the output of Algorithm \ref{alg:MCDPG} at the end of training is visualized at the top layer. To obtain this result, we run it for $10^5$ total episodes, and for each episode we evaluate the entropy using \eqref{prob:exploration-1}. We consider a constant step-size $\alpha=0.01, \beta= 0.1$, and $\eta=0.001$ throughout this experiment.

\subsection{Details about the Constrained Markov Decision Process}\label{subsec:CMDP}

In this subsection, we elaborate upon the implementation of Example \ref{eg:cmdp}, specifically, \eqref{prob:dual-cmdp} and its approximation using a logarithmic barrier as detailed in \eqref{eq:logarithmic_barrier}. We consider the problem of navigating through the FrozenLake environment as shown in Fig. \ref{fig:cmdp_occupancy}(bottom): we seek to reach the goal state $G$ (reward $=1$) from the starting location $S$ (reward $=0$), navigating along $F$ frozen spaces (reward $=0$), while avoiding  locations marked $C$ (reward $=-0.2$) that denote costly states (obstacles) and $H$ holes. 

We consider two approaches to the problem: first, we focus on optimizing the standard expected cumulative return \eqref{eq:value}, whose associated state occupancy measure is given in Fig. \ref{fig:cmdp_occupancy}(middle); second, we consider imposing constraints to avoid costly states [cf. \eqref{prob:dual-cmdp}] via a logarithmic barrier \eqref{eq:logarithmic_barrier}, whose resulting occupancy measure is depicted in Fig. \ref{fig:cmdp_occupancy}(top). Bluer/yellower colors denote higher/lower likelihoods, respectively. We observe that imposing constraints yields policies whose probability mass is concentrated away from constraints and instead along paths from the start to the goal. Thus, Algorithm \ref{alg:MCDPG} combined with a policy search scheme \eqref{defn:grad-proj} may be used to solve CMDPs.

This trend is corroborated in Fig. \ref{fig:cmdp_convergence}, which depicts the reward \ref{fig:CMDP_value} and cost \ref{fig:CMDP_cost} accumulation during test trajectories as a function of training index for Algorithm \ref{alg:MCDPG} for the cumulative return \eqref{eq:value} as compared with a logarithmic barrier imposed to solve CMDP \eqref{prob:dual-cmdp} for different penalty parameters $\beta$. We may observe that without imposing any constraint ($\beta=0$ in red), one achieves the highest reward, but incurs the most costs, i.e., hits obstacles most often, a form of ``reckless boldness." Larger $\beta$ means higher penalty for the constraints, and hence $\beta=4$ incurs lower cost and lower reward. We further added the curves for $\beta=1$ and $\beta=2$ for comparison. 

For all results reported in Fig. \ref{fig:cmdp_convergence}, we run the algorithm for $10K$ total training steps in the form of episodes. For each episode, we run a number of evaluation (test) trajectories in order to determine their merit, both in terms of reward and cost accumulation. Put more simply, we evaluate the performance averaged over a few test trajectories as a function of episode number and report its average over last $20$ episodes to show the trend. This is to illuminate policy improvement in its various forms (reward/cost accumulation) during training. Moreover, the algorithm is run with constant step-size $\eta=0.1$ throughout this experiment.

\section{Proof of Theorem \ref{thm-PG}}
\begin{proof}
   First note that for any $z\in \mathbb{R}^{SA}, x\in \mathbb{R}^{d}$, we have    %
    \begin{align}
	  V(\theta;z) & = \langle z, \lambda(\theta)\rangle\, , \nonumber \\
	 \nabla_{\theta} V(\theta;z)^{\top} x & = \langle z, \nabla_{\theta} \lambda(\theta)x \rangle\,,
	 \label{eq:appg}
	\end{align}
	where $\nabla_{\theta} \lambda(\theta)$ is the $SA\times d$ Jacobian matrix, the first identity holds by definition, and the second holds by directly differetiating the first indentity and product it with $x$.

Consider the saddle point problem in \eqref{thm-PG-2} for fixed  $0<\delta<1$. 
Let $G$ be any constant such that  $\|\nabla F(\lambda(\theta))\|_{\infty} < G$.
	Define 
	\begin{eqnarray}
	\label{thm-PG-proof-1}
	(x^*(\delta),z^*(\delta))& :=  \argmax_{x} \argmin_{\|z\|_\infty\leq G} \big\{ V(\theta;z) + \delta\nabla_{\theta} V(\theta;z)^{\top} x- F^*(z) -  \frac{\delta}{2}\|x\|^2 \big\}.
	\end{eqnarray}
	
	Note in \eqref{thm-PG-proof-1} we added the auxiliary  constraint set $\{z: \|z\|_\infty\leq G\}$, and later we will show that this constraint is inactive for all $\delta$ sufficiently small. We will also show that $(x^*(\delta),z^*(\delta))$ are bounded for all $\delta$ sufficiently small.
	
	By the first-order stationarity condition, we have 
	$$x^*(\delta) = \nabla_{\theta}V(\theta;z^*(\delta)).$$
	Note that $\nabla_{\theta}V(\theta;\cdot)$ is a linear function of $z$, thus there exists  $B>0$ such that $\|\nabla_{\theta}V(\theta;z)\|\leq B$ for all $ z\in\{\|z\|_\infty\leq G\}$.  And consequently $\|x^*(\delta)\|\leq B$ for all $\delta>0$. 
	
	For all $x\in\{\|x\|\leq 2B\}$, we have
    $$\lim_{\delta\to 0_+} \lambda(\theta) + \delta\nabla_{\theta} \lambda(\theta) x = \lambda(\theta) .$$
	Therefore, there exists some small $\delta_0>0$, such that for all $\delta<\delta_0$, the vector $\lambda(\theta) + \delta\nabla_{\theta} \lambda(\theta)x$ belongs to the neighborhood on which $F$ is differentiable and  
	$$\|\nabla F(\lambda(\theta) + \delta\nabla_{\theta} \lambda(\theta)x)\|_\infty<G,\quad\forall\ \ x\in\{x:\|x\|\leq 2B\}.$$ 
	In this case, we consider the unconstrained solution, for $\|x\|\leq 2B$, defined by 
	$$z^*(x;\delta) := \argmin_{z} V(\theta;z) + \delta\nabla_{\theta} V(\theta;z)^{\top} x- F^*(z) = \nabla F\big(\lambda(\theta) + \delta\nabla_{\theta} \lambda(\theta)x\big),$$
	and observe that the unconstrained solution satisfies $\|z^*(x;\delta)\|_{\infty}<G$, and consequently the constraint $\|z\|_{\infty}\leq G$ is not active. 
	Therefore, for $\delta<\delta_0$, we can equivalently rewrite \eqref{thm-PG-proof-1} as 
	\begin{eqnarray}
	\label{thm-PG-proof-1'}
	x^*(\delta)& := &  \argmax_{\|x\|\leq2B} \min_{z} \left\{ V(\theta;z) + \delta\nabla_{\theta} V(\theta;z)^{\top} x- F^*(z) -  \frac{\delta}{2}\|x\|^2 \right\}\\ 
	& = &  \argmax_{\|x\|\leq2B} F\left(\lambda(\theta) + \delta\nabla_{\theta} \lambda(\theta) x\right) -  \frac{\delta}{2}\|x\|^2\,, \nonumber
	\end{eqnarray}
Recall that we showed $\|x^*(\delta)\|\leq B$, therefore the constraint  $\|x\|\leq2B$ is also inactive and removable. 	
Therefore $x^*(\delta)$ is equivalent to the unconstrained min-max solution, for all $\delta$ sufficiently small, and 
Fenchel duality together with the first-order stationarity condition implies
   \begin{align*}
{x^*(\delta)} &=  \argmax_{x} \inf_z \left\{ V(\theta;z) + \delta\nabla_{\theta} V(\theta;z)^{\top} x- F^*(z) -  \frac{\delta}{2}\|x\|^2 \right\}\nonumber \\
&=\nabla_{\theta}\lambda(\theta)^\top\nabla F\big(\lambda(\theta) + \delta\nabla_{\theta}\lambda(\theta)x^*(\delta)\big).
   \end{align*}
   By using the fact that $\nabla F$ is continuous at $\lambda(\theta)$ and $x^*(\delta)$ is bounded, by letting $\delta\to 0$ on both sides, we get
   \begin{align*}
	\lim_{\delta\to 0_+}  {x^*(\delta)} 
	&= 
	\lim_{\delta\to 0_+} 
	\nabla_{\theta}\lambda(\theta)^\top\nabla F\big(\lambda(\theta) + \delta\nabla_{\theta}\lambda(\theta)x^*(\delta)\big)\\
	&= 
	\nabla_{\theta}\lambda(\theta)^\top\nabla F\big(\lambda(\theta) \big) \\
	& = \nabla R(\theta)\,,
   \end{align*}
   where the last equality uses the chain rule.

\end{proof}

\section{Proof of Theorem \ref{thm-PGestimate}}
\begin{proof}
	First, let us denote the expression in \eqref{thm-PG-2} for fixed $0<\delta<1$ as
	\begin{equation}
	\label{prob:SSP}
	(x^*(\delta),z^*(\delta)) = \argmax_{x}\argmin_{\|z\|_\infty\leq \ell_F} V(\theta;z) + \delta\nabla_\theta V(\theta;z)^\top x - F^*(z) - \frac{\delta}{2}\|x\|^2,
	\end{equation}
	and its approximation with empirically estimated value functions and their gradients in \eqref{eq:empirical_value_estimate} as
	\begin{equation}
	\label{prob:ESP}
	(\hat x(\delta),\hat z(\delta)) = \argmax_{x}\argmin_{\|z\|_\infty\leq\ell_F} \tilde V(\theta;z) + \delta\nabla_\theta \tilde V(\theta;z)^\top x - F^*(z)- \frac{\delta}{2}\|x\|^2.
	\end{equation}
	%
Then we decompose the entity $\EE\left[\left\|\hat \nabla_{\theta} R(\pi_\theta)- \nabla_{\theta} R(\pi_\theta)\right\|^2\right] $ into three terms by adding and subtracting (i) $ x^*(\delta) $ and (ii)  $ \hat{x}(\delta) $, which we then establish depends on the difference between (iii) $\hat{z}(\delta)$ and $z^*(\delta)$. Taken together with computing the limit of the right-hand side as  $\delta\rightarrow 0$ we obtain the result. Each of these steps is analyzed independently, whose estimation errors are derived in the following lemma.
%
\begin{lemma}\label{lemma:theorem2}
Consider $(x^*(\delta),z^*(\delta))$ and $(\hat x(\delta),\hat z(\delta)) $ as defined in \eqref{prob:SSP}-\eqref{prob:ESP}, respectively. Under the technical conditions stated in Theorem \ref{thm-PGestimate}, their respective estimation errors satisfy:
\begin{enumerate}[label=(\roman*)]

\item	$ \Big\| x^*(\delta) - \nabla_\theta R(\pi_\theta)\Big\|^2 = \cO(\delta^2).$ \label{lemma:step1}
\item	$\EE\left[\Big\| x^*(\delta) -  \hat x(\delta)\Big\|^2\right] \leq \frac{2C^2\|z^*(\delta)\|_\infty^2}{(1-\gamma)^4}\cdot\left(\frac{\gamma^{2K}}{(1-\gamma)^2} + \frac{1}{n}\right) + \frac{2C^2}{(1-\gamma)^4}\cdot\EE\left[\left\|z^*(\delta)-\hat z(\delta)\right\|_\infty^2\right].$ \label{lemma:step2}
\item	$\EE\left[\|\hat z(\delta)-z^*(\delta)\|^2_\infty\right]\leq\cO\left(\frac{L_F^2}{n(1-\gamma)^2} + \frac{L_F^2\ell_{F^*}^2}{n} + \frac{L_F^2\delta^2 + L_F\delta}{n}\right).$ \label{lemma:step3}
\end{enumerate}
\end{lemma}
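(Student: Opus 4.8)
The plan is to prove the three estimation-error bounds in sequence, since \ref{lemma:step1} is self-contained, \ref{lemma:step2} reduces to \ref{lemma:step3} together with concentration of the empirical value functions, and \ref{lemma:step3} is the heart of the argument.

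For part \ref{lemma:step1}, I would recall from the proof of Theorem \ref{thm-PG} that $x^*(\delta) = \nabla_\theta\lambda(\theta)^\top \nabla F(\lambda(\theta) + \delta\nabla_\theta\lambda(\theta) x^*(\delta))$ and $\nabla_\theta R(\pi_\theta) = \nabla_\theta\lambda(\theta)^\top\nabla F(\lambda(\theta))$, with $\|x^*(\delta)\|\le B$ uniformly in small $\delta$. Then $\|x^*(\delta) - \nabla_\theta R(\pi_\theta)\| \le \|\nabla_\theta\lambda(\theta)\|\cdot\|\nabla F(\lambda(\theta)+\delta\nabla_\theta\lambda(\theta)x^*(\delta)) - \nabla F(\lambda(\theta))\|_\infty$, and the $L_F$-smoothness assumption (ii) bounds the last factor by $L_F\delta\|\nabla_\theta\lambda(\theta)x^*(\delta)\|_1 = \cO(\delta)$, giving the $\cO(\delta^2)$ bound after squaring.

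For part \ref{lemma:step2}, both $x^*(\delta)$ and $\hat x(\delta)$ are the primal optima of strongly concave (in $x$) saddle problems, so by first-order stationarity $x^*(\delta) = \nabla_\theta V(\theta;z^*(\delta))$ and $\hat x(\delta) = \nabla_\theta\tilde V(\theta;\hat z(\delta))$. I would then split $x^*(\delta) - \hat x(\delta) = \big(\nabla_\theta V(\theta;z^*(\delta)) - \nabla_\theta\tilde V(\theta;z^*(\delta))\big) + \big(\nabla_\theta\tilde V(\theta;z^*(\delta)) - \nabla_\theta\tilde V(\theta;\hat z(\delta))\big)$. The first bracket is an empirical-average estimation error at the fixed direction $z^*(\delta)$: since $\nabla_\theta V(\theta;z;\zeta_i)$ is an unbiased estimate bounded in norm by $\cO(C\|z^*(\delta)\|_\infty/(1-\gamma)^2)$ (using assumption (iv) and $\sum_k\gamma^k = \cO(1/(1-\gamma))$), plus a truncation bias of order $\gamma^K$ from stopping episodes at length $K$, its squared expectation is $\cO\big(C^2\|z^*(\delta)\|_\infty^2(1-\gamma)^{-4}(\gamma^{2K}(1-\gamma)^{-2} + 1/n)\big)$. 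The second bracket is controlled by Lipschitz continuity of the linear map $z\mapsto\nabla_\theta\tilde V(\theta;z)$ in $\|\cdot\|_\infty$, with constant $\cO(C/(1-\gamma)^2)$, yielding the $\frac{2C^2}{(1-\gamma)^4}\EE[\|z^*(\delta)-\hat z(\delta)\|_\infty^2]$ term. Taking expectations and combining gives the stated inequality.

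For part \ref{lemma:step3} — which I expect to be the main obstacle — I would compare the two dual problems \eqref{prob:SSP} and \eqref{prob:ESP}. Eliminating $x$ in closed form ($x = \nabla_\theta V(\theta;z)$, resp. $\nabla_\theta\tilde V(\theta;z)$), the dual objectives become $\Psi(z) := -F^*(z) + V(\theta;z) + \frac{\delta}{2}\|\nabla_\theta V(\theta;z)\|^2$ and its empirical analogue $\hat\Psi(z)$ (with $\tilde V$ in place of $V$), each maximized over $\{\|z\|_\infty\le\ell_F\}$. The key structural fact is that $z\mapsto V(\theta;z) = \langle\lambda(\theta),z\rangle$ is linear and, more importantly, the true dual problem inherits strong concavity in $z$ (w.r.t. $\|\cdot\|_\infty$, or rather in the relevant norm) from the $L_F$-smoothness of $F$: since $F$ is $L_F$-smooth under $L_1$, its conjugate $F^*$ is $(1/L_F)$-strongly convex under $L_\infty$ on the relevant domain, so $-F^*$ makes $\Psi$ strongly concave with modulus $\Omega(1/L_F)$. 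Then a standard perturbation argument for strongly concave maximization gives $\|\hat z(\delta) - z^*(\delta)\|_\infty^2 \lesssim L_F \cdot \sup_{\|z\|_\infty\le\ell_F}|\Psi(z) - \hat\Psi(z)| $, or more carefully $\lesssim L_F^2\cdot\sup_z\|\nabla\Psi(z) - \nabla\hat\Psi(z)\|_\infty^2$ via the error-bound / strong-monotonicity inequality. The remaining work is to bound $\|\nabla\Psi(z) - \nabla\hat\Psi(z)\|_\infty$: the $-F^*$ part cancels; the $V$ vs. $\tilde V$ part contributes $\|\lambda(\theta) - \tilde\lambda\|_\infty$ where $\tilde\lambda$ is the empirical occupancy estimate, of order $\cO(1/\sqrt{n} + \gamma^K)$ in expectation by the same unbiasedness-plus-bounded-variance-plus-truncation reasoning as in part \ref{lemma:step2} (here the $(1-\gamma)^{-2}$ appears); and the $\frac{\delta}{2}\|\nabla_\theta V\|^2$ vs. $\frac{\delta}{2}\|\nabla_\theta\tilde V\|^2$ part contributes the $\cO(L_F^2\delta^2 + L_F\delta)/n$ term after using boundedness of the gradients and the $\ell_{F^*}$-Lipschitz bound on $F^*$ (assumption (iii)) to control the size of $z^*(\delta)$. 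Collecting the three contributions to $\|\nabla\Psi - \nabla\hat\Psi\|_\infty^2$, multiplying by the $L_F^2$ from the strong-concavity inverse modulus, and taking expectations yields $\cO\big(\frac{L_F^2}{n(1-\gamma)^2} + \frac{L_F^2\ell_{F^*}^2}{n} + \frac{L_F^2\delta^2 + L_F\delta}{n}\big)$, as claimed. The delicate points are making the strong-concavity-of-$\Psi$ argument rigorous in the right norm (the paper works with $\|\cdot\|_\infty$ on $z$, which is not an inner-product norm, so one must either pass through the $F^*$ strong convexity on the appropriate subspace or use a direct comparison of optimal values) and carefully tracking how the extra $\frac{\delta}{2}\|\nabla_\theta V(\theta;z)\|^2$ regularizer perturbs the dual optimum, which is the source of the $L_F^2\delta^2 + L_F\delta$ term.
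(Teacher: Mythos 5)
Your parts (i) and (ii) follow essentially the paper's own route: part (i) via the fixed-point characterization $x^*(\delta)=\nabla_\theta\lambda(\theta)^\top\nabla F\big(\lambda(\theta)+\delta\nabla_\theta\lambda(\theta)x^*(\delta)\big)$ from Theorem \ref{thm-PG} plus $L_F$-smoothness (the paper additionally checks, using condition (i) and $\|\lambda(\theta)\|_1=(1-\gamma)^{-1}$, that the constraint $\|z\|_\infty\le\ell_F$ in \eqref{prob:SSP} is inactive for small $\delta$, so that the unconstrained characterization applies -- you should state this), and part (ii) via the same stationarity identities, the same two-term decomposition, the bias--variance split with truncation bias of order $\gamma^K$, and linearity of $z\mapsto\nabla_\theta\tilde V(\theta;z)$.

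Part (iii) is where you genuinely diverge from the paper, and where there is a gap. The paper does not run a perturbation analysis of the reduced dual problem; it applies the stability-based (hence dimension-free) generalization bound for strongly-convex--strongly-concave stochastic saddle points of \cite{Gen_SPP} directly to the constrained pair \eqref{prob:SSP'}--\eqref{prob:ESP'} (after adding the inactive constraint $\|x\|\le C\ell_F/(1-\gamma)^2$), with $\mu_x=\delta$ coming from the $-\frac{\delta}{2}\|x\|^2$ term, $\mu_z=1/L_F$ coming from the fact that $L_F$-smoothness of $F$ under $L_1$ makes $-F^*$ $(1/L_F)$-strongly convex under $L_\infty$ (Theorem 3 of \cite{kakade2012regularization}; note that with the paper's convention $F^*$ itself is concave, so your sign conventions need fixing), and per-sample Lipschitz constants $\ell_x(\zeta,z)=\cO(\delta)$ and $\ell_z(\zeta,x)=\frac{1}{1-\gamma}+\ell_{F^*}+\cO(\delta)$. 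Theorem 1 of \cite{Gen_SPP} then gives $\frac{\mu_z}{2}\EE\big[\|\hat z-z^*\|_\infty^2\big]\le\frac{2\sqrt2}{n}\big(\frac{(\ell_x^w)^2}{\mu_x}+\frac{(\ell_z^w)^2}{\mu_z}\big)$, which is exactly the stated $1/n$ bound; in particular the $L_F\delta/n$ term arises from $(\ell_x^w)^2/\mu_x=\cO(\delta)$, i.e., from the interplay of the $x$-regularizer with the $\cO(\delta)$ Lipschitz constant in $x$, not from a perturbation of the dual optimum as you suggest.

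Your proposed substitutes cannot reach the stated bound. The variant through $\sup_z|\Psi(z)-\hat\Psi(z)|$ yields at best an $\cO(1/\sqrt n)$ bound on $\EE\big[\|\hat z-z^*\|_\infty^2\big]$, since the uniform deviation is itself of order $1/\sqrt n$, so the $1/n$ rate of the lemma is lost. The variant through gradient differences has a norm mismatch: strong convexity with respect to $\|\cdot\|_\infty$ pairs with the $L_1$ (not $L_\infty$) norm of $\nabla\Psi-\nabla\hat\Psi$, and since that gradient difference is essentially $\lambda(\theta)-\tilde\lambda$ (the empirical occupancy estimate) up to $\cO(\delta)$ terms, you end up controlling $\EE\big[\|\lambda(\theta)-\tilde\lambda\|_1^2\big]$, which carries an extra factor of the dimension $SA$ (order $SA/(n(1-\gamma)^2)$). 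This contradicts the dimension-independence of the lemma's bound, which the paper explicitly emphasizes in Remark (2) after Theorem \ref{thm-PGestimate}. So as written, your part (iii) either sacrifices the $1/n$ rate or the dimension-freeness; recovering the stated bound requires a stability-type argument such as the one the paper imports from \cite{Gen_SPP}.
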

Combining the three steps and the fact that $\|z^*(\delta)\|_\infty\leq\ell_F$ yields
$$\EE\left[\left\| \hat x(\delta) - \nabla_{\theta} R(\theta)\right\|^2\right] \leq \cO\left(\frac{C^2(\ell_F^2 + L_F^2\ell_{F^*}^2)}{n(1-\gamma)^4} + \frac{C^2L_F^2}{n(1-\gamma)^6}\right) + \cO(\delta^2 + \delta/n + \gamma^K).$$
Let $\delta\to0$, we get 
$$\EE\left[\left\|\hat \nabla_{\theta} R(\pi_\theta)- \nabla_{\theta} R(\pi_\theta)\right\|^2\right] \leq \cO\left(\frac{C^2(\ell_F^2 + L_F^2\ell_{F^*}^2)}{n(1-\gamma)^4} + \frac{C^2L_F^2}{n(1-\gamma)^6}\right) + \cO(\gamma^K).$$

Lemma \ref{lemma:theorem2}\ref{lemma:step1} - \ref{lemma:step3} is proved in the next subsection. For the ease of notation, we will simply denote $x^*$ and $\hat x$ instead of $x^*(\delta)$ and $\hat x(\delta)$. Similarly, we denote $z^*$ and $\hat z$ instead of $z^*(\delta)$ and $\hat z(\delta)$. 

\end{proof}
\subsection{Preliminary Technicalities}\label{subsec:properties}

\textbf{Linearity property}.
The functions $Q$, $V$ and $\nabla_{\theta}V$ are linear in the reward function. Namely, for any $\alpha,\alpha'\in \RR$ and $r,r'\in\RR^{|\cS||\cA|}$, 
$$\alpha \nabla_{\theta}V(\theta;r) + \alpha' \nabla_{\theta}V(\theta;r') = \nabla_{\theta}V(\theta;\alpha r + \alpha' r').$$
Similar identities holds for $Q^{\pi_{\theta}}(s,a;\cdot)$ and $V(\theta;\cdot)$. For the stochastic estimators $\nabla_{\theta} \tilde V(\theta;r;\zeta)$, it is straightforward to check that the linearity property is still true. \\
\textbf{Upperbounding $Q$ and $V$}. Given an arbitrary reward function $r$, the upper bounds of $Q$ and $V$ functions are 
$$|Q^{\pi_\theta}(s,a;r)|\leq \frac{\|r\|_\infty}{1-\gamma}\qquad\mbox{and}\qquad |V(\theta;r)|\leq\frac{\|r\|_\infty}{1-\gamma}.$$ \\
\textbf{Uniform upperbounds for estimators.}
Given any sample path $\zeta = \{(s_k,a_k)\}_{k=0}^K$, the estimators $\tilde V(\theta;z;\zeta)$ and $\nabla_\theta\tilde{V}(\theta;z;\zeta)$ are upper bounded by 
\begin{equation}
\label{thm2-uperbound-sto}
\tilde V(\theta;z;\zeta)|\leq \frac{\|z\|_\infty}{1-\gamma}\qquad\mbox{and}\qquad \|\nabla_{\theta}\tilde V(\theta;z;\zeta)\|\leq \frac{C\|z\|_\infty}{(1-\gamma)^2}.
\end{equation}
Consequently, as the sample averages of $\tilde V(\theta;z;\zeta_i)$ and $\nabla_\theta\tilde{V}(\theta;z;\zeta_i)$, we also have 
\begin{equation}
\label{thm2-uperbound-sto'}
|\tilde V(\theta;z)|\leq \frac{\|z\|_\infty}{1-\gamma}\qquad\mbox{and}\qquad \|\nabla_{\theta}\tilde V(\theta;z)\|\leq \frac{C\|z\|_\infty}{(1-\gamma)^2}
\end{equation}
for any set of sample paths $\{\zeta_i\}_{i=1}^n$.
\begin{proof}
For $\tilde V(\theta;z;\zeta)$,  for any $z$,
\begin{eqnarray*}
	|\tilde V(\theta;z;\zeta)| = \left|\sum^K_{k=0} \gamma^k\cdot z(s_k,a_k)\right| \leq \sum^K_{k=0} \gamma^k\|z\|_\infty \leq \frac{\|z\|_\infty}{1-\gamma}
\end{eqnarray*}
For $\nabla_{\theta}\tilde V(\theta;z;\zeta)$,  for any $z$,
\begin{eqnarray*}
\|\nabla_{\theta}\tilde V(\theta;z;\zeta)\| & = & \left\|\sum^K_{k=1}\sum_{a\in\cA} \gamma^k\cdot Q(s_k,a; z)\nabla_{\theta}\pi_{\theta}(a|s_k)\right\|\\
& \leq & \sum^K_{k=1}\gamma^k\cdot\left\|\sum_{a\in\cA}  Q(s_k,a; z)\nabla_{\theta}\pi_{\theta}(a|s_k)\right\|\\
& \leq & \sum^K_{k=1}\gamma^k\cdot\max_{\|u\|_\infty\leq\frac{\|z\|_\infty}{1-\gamma}}\left\|\pi_{\theta}(\cdot|s_k)u\right\|\\
& \leq & \frac{C\|z\|_\infty}{(1-\gamma)^2}.
\end{eqnarray*}
\end{proof}
\subsection{Proof of Lemma \ref{lemma:theorem2}\ref{lemma:step1}.}
Consider the problem \eqref{prob:SSP}. First let us ignore the requirement that $\|z\|_\infty \leq \ell_F$. For this series of unconstrained problem, Theorem \ref{thm-PG} suggests that 
$$\lim_{\delta\to 0_+} x^*(\delta) = \nabla_\theta R(\pi_{\theta}).$$
Consequently, $\lim_{\delta\to 0_+} \lambda(\theta) + \delta\nabla_{\theta}\lambda(\theta)x^*(\delta) = \lambda(\theta)$. Because $\|\lambda(\theta)\|_1 = (1-\gamma)^{-1}$, $\exists \delta_0>0$ s.t. when $\delta<\delta_0$ we have 
$$\|\lambda(\theta) + \delta\nabla_{\theta}\lambda(\theta)x^*(\delta)\|_1\leq \frac{2}{1-\gamma}.$$
According to condition (i) of this theorem, we have
$$\|\nabla F(\lambda(\theta) + \delta\nabla_{\theta}\lambda(\theta)x^*(\delta))\|_\infty\leq \ell_F.$$
It is worth noting that $z^*(\delta)=\nabla F(\lambda(\theta) + \delta\nabla_{\theta}\lambda(\theta)x^*(\delta))$ is also the solution to the unconstrained version of \eqref{prob:SSP}. Therefore we have $\|z\|_\infty\leq\ell_F$, so that we can add this to the constraint without changing the optimal solutions. By the intermediate result in the proof of Theorem \ref{thm-PG}, we have
\[
x^*(\delta)=  \nabla_{\theta}\lambda(\theta)^\top\nabla F\big(\lambda(\theta) + \delta\nabla_{\theta}\lambda(\theta)x^*(\delta)\big).
\]
Consequently, 
by the Lipschitz continuity of $\nabla F$, we have 
\begin{eqnarray*}
  \Big\|x^*(\delta) - \nabla_\theta R(\theta)\Big\|^2 & = & \Big\|\nabla_{\theta}\lambda(\theta)^\top\nabla F\big(\lambda(\theta) + \delta\nabla_{\theta}\lambda(\theta)x^*(\delta)\big)-\nabla_{\theta}\lambda(\theta)^\top\nabla F\big(\lambda(\theta) \big)\Big\|^2\nonumber\\
  & \leq & \|\nabla_{\theta}\lambda(\theta)^\top\|_{\infty,2}\cdot\Big\|\nabla F\big(\lambda(\theta) + \delta\nabla_{\theta}\lambda(\theta)x^*(\delta)\big)-\nabla F\big(\lambda(\theta) \big)\Big\|^2_\infty\nonumber\\
  & \leq & L_F\|\nabla_{\theta}\lambda(\theta)^\top\|^2_{\infty,2}\cdot\Big\|\delta\nabla_{\theta}\lambda(\theta)x^*(\delta)\Big\|^2_1\nonumber\\
  &  = & \cO(\delta^2).
\end{eqnarray*}
as stated in Lemma \ref{lemma:theorem2}\ref{lemma:step1}.
In the last step, we used the fact that $x^*(\delta)$ is bounded because $x^*(\delta)\to\nabla_\theta R(\pi_{\theta})$.\hfill $\qed$

\subsection{Proof of Lemma \ref{lemma:theorem2}\ref{lemma:step2}.}
By the first order stationarity condition of the problems \eqref{prob:SSP}-\eqref{prob:ESP}, we know 
%
$$ x^*= \nabla_{\theta}V(\theta;z^*)\qquad\mbox{and}\qquad \hat x = \nabla_{\theta}\tilde V(\theta;\hat z).$$
Consider the norm-difference between the preceding quantities:
\begin{eqnarray}
\label{thm-pgestimate-1}
\EE\left[\big\| x^* -  \hat x\big\|^2\right] \leq 2\EE\left[\|\nabla_\theta V(\theta;z^*) - \nabla_{\theta} \tilde V(\theta;z^*)\|^2\right] +  2\EE\left[\|\nabla_{\theta} \tilde V(\theta;z^*) - \nabla_{\theta} \tilde V(\theta;\hat z)\|^2\right].
\end{eqnarray}
To bound the term $\EE\left[\|\nabla_\theta V(\theta;z^*) - \nabla_{\theta} \tilde V(\theta;z^*)\|^2\right]$, recall the definition \eqref{eq:empirical_value_estimate}:
$$\nabla \tilde V(\theta;z) := \frac{1}{n}\sum_{i=1}\nabla_{\theta}V(\theta;z;\zeta_i)= \frac{1}{n}\sum_{i=1}^n\sum^K_{k=1}\sum_{a\in\cA} \gamma^kQ(s_k^{(i)},a; z)\nabla_{\theta}\pi_{\theta}(a |s_k^{(i)}).$$
Consider the first term on the right-hand side of \eqref{thm-pgestimate-1}. Add and subtract $ \EE\left[\nabla_{\theta} \tilde V(\theta;z^*)\right]$ and use the fact that $ \EE\left[\nabla_{\theta} \tilde V(\theta;z^*)\right]= \nabla_{\theta}  V(\theta;z^*)$, i.e., the bias-variance decomposition identity, to write
\begin{eqnarray}
\label{thm-pgestimate-2}
&  \EE \!\!&\!\!\left[\left\|\nabla_\theta V(\theta;z^*) - \nabla_{\theta} \tilde V(\theta;z^*)\right\|^2\right] \\
& = & \left\|\nabla_\theta V(\theta;z^*) - \EE\left[\nabla_{\theta} \tilde V(\theta;z^*)\right]\right\|^2 + \EE\left[\left\|\nabla_{\theta} \tilde V(\theta;z^*)-\EE\left[\nabla_{\theta} \tilde V(\theta;z^*)\right]\right\|^2\right]\nonumber.
\end{eqnarray}
For the first (squared bias) term on the right-hand side of \eqref{thm-pgestimate-2}, denote $d^\pi_{\xi,K}(s) = (1-\gamma)\sum^K_{t=0} \gamma^t\mathbf{Prob}(s_t = s|\pi,s_0\sim \xi)$. Then it is straightforward that 
$\sum_{s}|d^\pi_{\xi,K}(s)-d^\pi_{\xi}(s)|\leq \frac{\gamma^K}{1-\gamma}$.  As a result, we know 
\begin{eqnarray}
\label{thm-pgestimate-3}
& &\hspace{-2.5cm} \left\|\nabla_\theta V(\theta;z^*) - \EE\left[\nabla_{\theta} \tilde V(\theta;z^*)\right]\right\|^2 \\
& = &\frac{1}{(1-\gamma)^2} \left\|\sum_{s}\left(d^{\pi}_\xi(s) - d^{\pi}_{\xi,K}(s)\right)\sum_a Q^{\pi_{\theta}}(s,a;z^*)\nabla_{\theta}\pi_{\theta}(a|s)\right\|^2\nonumber\\
& = & \frac{1}{(1-\gamma)^2} \left(\sum_{s}|d^{\pi}_\xi(s) - d^{\pi}_{\xi,K}(s)|\cdot\Big\|\sum_a Q^{\pi_{\theta}}(s,a;z^*)\nabla_{\theta}\pi_{\theta}(a|s)\Big\|\right)^2\nonumber\\
& = & \frac{1}{(1-\gamma)^2} \left(\sum_{s}|d^{\pi}_\xi(s) - d^{\pi}_{\xi,K}(s)|\cdot\Big\|\sum_a Q^{\pi_{\theta}}(s,a;z^*)\nabla_{\theta}\pi_{\theta}(a|s)\Big\|\right)^2\nonumber\\
& \leq & \frac{1}{(1-\gamma)^2} \left(\sum_{s}|d^{\pi}_\xi(s) - d^{\pi}_{\xi,K}(s)|\cdot\max_{\|u\|_\infty\leq\frac{\|z^*\|_\infty}{1-\gamma}}\|\nabla_{\theta}\pi(\cdot|s) u\|\right)^2\nonumber\\
& \leq & \frac{\|\nabla_{\theta}\pi(\cdot|s)\|^2_{\infty,2}\cdot\|z^*\|_\infty^2}{(1-\gamma)^4} \left(\sum_{s}|d^{\pi}_\xi(s) - d^{\pi}_{\xi,K}(s)|\right)^2\nonumber\\
& \leq & \frac{C^2\|z^*\|_\infty^2}{(1-\gamma)^6}\gamma^{2K}.\nonumber
\end{eqnarray}
%
Next, we consider the second (variance) term on the right-hand side of \eqref{thm-pgestimate-2}. By substituting \eqref{eq:empirical_value_estimate} in for $\nabla_{\theta} \tilde V(\theta;z^*)$ to rewrite it in terms of trajectories $\zeta_i$, we have
 \begin{eqnarray}
\label{thm-pgestimate-4}
\EE\left[\left\|\nabla_{\theta} \tilde V(\theta;z^*)-\EE\left[\nabla_{\theta} \tilde V(\theta;z^*)\right]\right\|^2\right] 
& = & \frac{1}{n}\EE\left[\left\|\nabla_{\theta} \tilde V(\theta;z^*;\zeta_i)-\EE\left[\nabla_{\theta} \tilde V(\theta;z^*;\zeta_i)\right]\right\|^2\right]\nonumber\\
&\leq & \frac{1}{n}\EE\left[\left\|\nabla_{\theta} \tilde V(\theta;z^*;\zeta_i)\right\|^2\right]\nonumber\\
&\leq&  \frac{C^2\|z^*\|_\infty^2}{n(1-\gamma)^4}\nonumber.
\end{eqnarray}
The first inequality comes from crudely upper-bounding the bias by the estimator itself. The last equality uses \eqref{thm2-uperbound-sto}.
%

Now, returning focus to the second term in the bound \eqref{thm-pgestimate-1}, by the linearity of the stochastic estimators with respect to the differential and \eqref{thm2-uperbound-sto'}, we have 
\begin{eqnarray}
\left\|\nabla_{\theta} \tilde V(\theta;z^*) - \nabla_{\theta} \tilde V(\theta;\hat z)\right\|^2 = \left\|\nabla_{\theta} \tilde V(\theta;z^*-\hat z) \right\|^2\leq \frac{C^2\|z^*-\hat z\|^2_\infty}{(1-\gamma)^4}.\nonumber
\end{eqnarray}

%
Taking the expectation after squaring both sides yields
\begin{eqnarray}
\label{thm-pgestimate-5}
\EE\left[\left\|\nabla_{\theta} \tilde V(\theta;z^*) - \nabla_{\theta} \tilde V(\theta;\hat z)\right\|^2\right] \leq \frac{C^2}{(1-\gamma)^4}\EE\left[\|z^*-\hat z\|_\infty^2\right].
\end{eqnarray}

Combining inequalities \eqref{thm-pgestimate-1}, \eqref{thm-pgestimate-2}, \eqref{thm-pgestimate-3}, \eqref{thm-pgestimate-4}, \eqref{thm-pgestimate-5} yields 
$$\EE\left[\big\| x^* -  \hat x\big\|^2\right] \leq \frac{2C^2\|z^*\|_\infty^2}{(1-\gamma)^4}\cdot\left(\frac{\gamma^{2K}}{(1-\gamma)^2} + \frac{1}{n}\right) + \frac{2C^2}{(1-\gamma)^4}\cdot\EE\left[\left\|z^*-\hat z\right\|_\infty^2\right].$$
which is as stated in Lemma \ref{lemma:theorem2}\ref{lemma:step2}. \hfill $\qed$

\subsection{Proof of Lemma \ref{lemma:theorem2}\ref{lemma:step3}.}

In this section we will apply the generalization bound for stochastic saddle points from \cite{Gen_SPP} to bound the term $\EE[\|\hat z - z^*\|_\infty^2]$. To achieve this, we need a compact feasible region for $x$. Note that for problems \eqref{prob:SSP} and \eqref{prob:ESP}, the solutions $x^*$ and $\hat{x}$ has the form 
$$x^* = \nabla_{\theta}{V}(\theta; z^*)\qquad\mbox{and}\qquad\hat x = \nabla_{\theta}\tilde{V}(\theta;\hat z).$$
Due to \eqref{thm2-uperbound-sto'} and the constraint that $\|z\|_\infty\leq\ell_F$, we have 
$\|x^*\| \leq \frac{C\|z^*\|_\infty}{(1-\gamma)^2}\leq \frac{C\ell_F}{(1-\gamma)^2}$ and thus $\|\hat x\| \leq  \frac{C\ell_F}{(1-\gamma)^2}$ with probability 1. Therefore, adding a constraint that $\|x\|\leq \frac{C\ell_F}{(1-\gamma)^2}$ will not change the solutions of problems \eqref{prob:SSP} and \eqref{prob:ESP}. Formally speaking, we will then apply the theory of \cite{Gen_SPP} to the following pair of constrained problems:
\begin{equation}
\label{prob:SSP'}
(x^*,z^*) = \argmax_{x\in\cX}\argmin_{z\in\mathcal{Z}} V(\theta;z) + \delta\nabla_\theta V(\theta;z)^\top x - F^*(z) - \frac{\delta}{2}\|x\|^2,
\end{equation}
and
\begin{equation}
\label{prob:ESP'}
(\hat x ,\hat z ) = \argmax_{ x\in\cX}\argmin_{z\in\mathcal{Z}} \tilde V(\theta;z) + \delta\nabla_\theta \tilde V(\theta;z)^\top x - F^*(z)- \frac{\delta}{2}\|x\|^2.
\end{equation}
with $\cX = \{x: \|x\|\leq\frac{C\ell_F}{(1-\gamma)^2}\}$ and $\mathcal{Z} = \{z:\|z\|_\infty\leq\ell_F\}$. The problems \eqref{prob:SSP} and \eqref{prob:SSP'} share the same solution, and problems \eqref{prob:ESP} and \eqref{prob:ESP'} share the same solution.  

Finally, similar to the proof of \eqref{thm-pgestimate-3}, for any $x\in\cX$ and $z\in\mathcal{Z}$
$$V(\theta;z) +\delta\nabla_{\theta}V(\theta;z)^\top x - \EE\left[\tilde V(\theta;z;\zeta_i) + \delta\nabla_{\theta}\tilde{V}(\theta;z;\zeta_i)^\top x\right] =  \cO\left(\frac{\gamma^K}{1-\gamma}\right).$$
For the simplicity of discussion, let us assume that $K$ is large enough so that we can ignore the $\cO\left(\frac{\gamma^K}{1-\gamma}\right)$ bias. Therefore problem \eqref{prob:ESP'} can be viewed as an empirical version of the problem \eqref{prob:SSP'} with negligible bias. To apply the theory of \cite{Gen_SPP}, define 
$$\Psi_\zeta(x,z) := \tilde V(\theta;z;\zeta) + \delta\nabla_\theta \tilde V(\theta;z;\zeta)^\top x - F^*(z)- \frac{\delta}{2}\|x\|^2.$$
Then for any sample path $\zeta$, $\Psi_\zeta$ satisfies the following set of properties:
\begin{itemize}
	\item $\Psi_\zeta(\cdot,z)$ is $\mu_x$-strongly concave under $L_2$ norm. And $\Psi_\zeta(x,\cdot)$ is $\mu_z$-strongly convex under the $L_\infty$ norm. In other words, for $\forall x,x'\in\cX$ and $z,z'\in\mathcal{Z}$,
	\begin{equation*}
	\begin{cases}
	\Psi_\zeta(x',z)\geq\Psi_\zeta(x,z)  + \langle u,x'-x\rangle + \frac{\mu_x}{2}\|x'-x\|^2, &  u\in\partial_x\Psi_\zeta(x,z),\\
	\Psi_\zeta(x,z')\leq\Psi_\zeta(x,z) + \langle v,z'-z\rangle - \frac{\mu_z}{2}\|z'-z\|_\infty^2,& v\in\partial_z \Psi_\zeta(x,z).
	\end{cases}
	\end{equation*} 
	In our case, it is clear that $\mu_x = \delta$. Due to Theorem 3 of \cite{kakade2012regularization}, $\mu_z = L_F^{-1}$.
	\item The feasible regions $\cX$ and $\mathcal{Z}$ are compact convex sets. For every $\zeta$,  there exist constants $\ell_x(\xi,z)$ and $\ell_z(\xi,x)$ s.t. 
	\begin{equation*}
	\begin{cases}
	|\Psi_\zeta(x',z)-\Psi_\zeta(x,z)| \leq \ell_x(\zeta,z)\|x'-x\|,& \forall x,x'\in\cX \,\,\mbox{and}\,\,y\in\cY,\\
	|\Psi_\zeta(x,z')-\Psi_\zeta(x,z)|\leq \ell_z(\zeta,x)\|z'-z\|_\infty,& \forall z,z'\in\mathcal{Z}\,\,\mbox{and}\,\,x\in\cX.
	\end{cases}
	\end{equation*} 
	In our case, we gave $\ell_z(\zeta,x) = \sup\left\{\|u\|_1: z\in\mathcal{Z}, u\in\partial_z \Psi_\zeta(x,z)\right\} = \frac{1}{1-\gamma} + \ell_{F^*} + \cO(\delta)$ and 
	$\ell_x(\zeta,z) = \sup_{x\in\cX}\|\nabla_x \Psi_\zeta(x,z)\|  = \cO(\delta)$. Consequently,  
	\begin{equation*} 
	\begin{cases}
	(\ell_x^w)^2:=\sup_{z\in\mathcal{Z}}\EE\big[\ell^2_x(\zeta,z)\big] = \cO(\delta^2),\\
	(\ell_z^w)^2:=\sup_{x\in\cX}\EE\big[\ell^2_z(\zeta,x)\big] = \cO(\ell_{F^*}^2 + \frac{1}{(1-\gamma)^2} + \delta^2).
	\end{cases}
	\end{equation*}
\end{itemize}
With the above two properties, Theorem 1 of \cite{Gen_SPP} indicates that 
$$\frac{\mu_z}{2}\EE\left[\|\hat z-z^*\|^2_\infty\right]\leq\frac{2\sqrt{2}}{n}\cdot\left( \frac{(\ell^w_x)^2}{\mu_x} + \frac{(\ell_z^w)^2}{\mu_z}\right).$$
With the detailed parameters substituted in the above inequality, we have 
$$\EE\left[\|\hat z-z^*\|^2_\infty\right]\leq\cO\left(\frac{L_F^2}{n(1-\gamma)^2} + \frac{L_F^2\ell_{F^*}^2}{n} + \frac{L_F^2\delta^2 + L_F\delta}{n}\right)$$

as stated in Lemma  \ref{lemma:theorem2}\ref{lemma:step3}. \hfill $\qed$

\section{Proof of Theorem \ref{theorem:global-opt}}\label{appdx:theorem:global-opt}
\begin{proof}
Let $\theta^*$ be a first-order stationary solution of \eqref{prob:dual-para-pi}.
	When $F$ is concave and locally Lipschitz continuous in a neighbourhood containing $\lambda(\Theta)$,
we can  compute the Fr\'echet superdifferential of $F\circ\lambda$ at $\theta^*$ by the chain rule, see \cite{drusvyatskiy2019efficiency}. That is 
	$$\hat{\partial}(F\circ\lambda)(\theta^*) = \left[\nabla_\theta\lambda(\theta^*)\right]^\top\partial F(\lambda^*)$$ where $\partial \F(\lambda^*)$ denotes the set of supergradients of the concave function $\F$ at $\lambda^*$. Then there exists $w^*\in\partial F(\lambda^*)\in\RR^{SA}$ such that $u^* := [\nabla_\theta\lambda(\theta^*)]^\top w^* \in \hat{\partial}(F\circ\lambda)(\theta^*)$ as in \eqref{defn:1st-order-condition}. 
It follows from \eqref{defn:1st-order-condition} that
	\begin{equation}	\label{thm:global-opt-1}
	\langle w^*,\nabla_{\theta }\lambda(\theta^*)(\theta-\theta^*)\rangle \leq 0, \quad\mbox{for}\quad \forall \theta\in\Theta.
	\end{equation}
	For any $ \lambda\in\lambda(\Theta)$, we let $\theta := g(\lambda)$ such that $\lambda = \lambda(\theta)$. Therefore, by adding and subtracting $\nabla_{\theta}\lambda(\theta^*) \theta $ inside the inner product we have 
	\begin{eqnarray}
	\label{eqn:scboy-1}
	\langle w^*,\lambda-\lambda^*\rangle
	& = & \langle w^*,\lambda(\theta)-\lambda(\theta^*)\rangle \\
	& = & \langle w^*,\nabla_{\theta}\lambda(\theta^*)(\theta-\theta^*)\rangle  + \langle w^*,\lambda(\theta)-\lambda(\theta^*)-\nabla_{\theta}\lambda(\theta^*)(\theta-\theta^*)\rangle\nonumber\\
	& \leq & 0 + \|w^*\|\|\lambda(\theta)-\lambda(\theta^*)-\nabla_{\theta}\lambda(\theta^*)(\theta-\theta^*)\|\nonumber.	
	\end{eqnarray}
	where in the last inequality we group terms and apply Cauchy-Schwartz.
	Note that the Jacobian matrix $\nabla_{\theta}\lambda(\theta)$ is Lipschitz continuous. Denote the Lipschitz constant by $L_\lambda$, i.e., $\|\nabla_{\theta}\lambda(\theta) - \nabla_{\theta}\lambda(\theta')\|\leq L_\lambda\|\theta-\theta'\|$ for all $\theta,\theta'\in\Theta$. Then, $$\|\lambda(\theta)-\lambda(\theta^*)-\nabla_{\theta}\lambda(\theta^*)(\theta-\theta^*)\|\leq\frac{L_\lambda}{2}\|\theta-\theta^*\|^2.$$
	By Assumption \ref{assumption:gen-para}, we know 
	$$\|\theta-\theta^*\|^2 = \|g(\lambda) - g(\lambda^*)\|^2 \leq \ell_{\theta}^2\normm{\lambda-\lambda^*}^2.$$
	Substituting the above inequalities into \eqref{eqn:scboy-1} yields
	\begin{equation}
	\label{thm:global-opt-2}
	\langle w^*,\lambda-\lambda^*\rangle \leq \frac{L_\lambda\ell_{\theta}^2}{2}\|w^*\|\normm{\lambda-\lambda^*}^2\quad\quad \forall \lambda\in\lambda(\Theta).
	\end{equation}
	Note that \eqref{thm:global-opt-2} holds for arbitrary $\lambda\in\lambda(\Theta)$. Therefore, since $\lambda(\Theta)$ is assumed to be convex (Assumption \ref{assumption:gen-para}(i)), we can also substitute $\lambda$ with $(1-\alpha)\lambda^* + \alpha\lambda, \alpha\in[0,1]$ into the above equation, which yields
	$$\alpha\langle w^*,\lambda-\lambda^*\rangle \leq \frac{L_\lambda\ell_{\theta}^2\alpha^2}{2}\|w^*\|\normm{\lambda-\lambda^*}^2\quad\quad \forall \lambda\in\cL, \forall \alpha\in[0,1].$$
	Divide both sides of the preceding expression by $\alpha$ and take $\alpha\rightarrow0+$ gives
	$$\langle w^*,\lambda-\lambda^*\rangle \leq \lim\limits_{\alpha\rightarrow0+} \frac{L_\lambda\ell_{\theta}^2\alpha}{2}\|w^*\|\normm{\lambda-\lambda^*}^2 = 0 \quad \quad \forall \lambda\in\lambda(\Theta).$$
	Recall that the following problem is concave in $\lambda$:
	$$\max_{\lambda}\quad\!\! F(\lambda)\qquad\mbox{s.t.}\qquad \lambda\in\lambda(\Theta),$$
	therefore we conclude that $\lambda^*$ is the global optimal solution. Then $\theta^* = g(\lambda^*)$ is the globally optimal solution of the nonconvex optimization problem \eqref{prob:dual-para-pi}.
\end{proof}

\section{Proof of Theorem \ref{theorem:iteration complexity-gen}}
\label{appdx:theorem:iteration complexity}
\subsection{Proof of sublinear convergence}
\begin{proof}
First, the Lipschitz continuity in Assumption \ref{assumption:ncvx-Lip} indicates that 
	$$\left|\F(\lambda(\theta)) - \F(\lambda(\theta^k)) - \langle \nabla_\theta\F(\lambda(\theta^k)),\theta-\theta^k\rangle\right|\leq \frac{L}{2}\|\theta-\theta^k\|^2.$$
	Consequently, for any $\theta\in\Theta$ we have the ascent property:
	\begin{equation}\label{eq:taylor_proof}
	\F(\lambda(\theta)) \geq \F(\lambda(\theta^k)) + \langle \nabla_\theta\F(\lambda(\theta^k)),\theta-\theta^k\rangle - \frac{L}{2}\|\theta-\theta^k\|^2 \geq \F(\lambda(\theta)) - L\|\theta-\theta^k\|^2.
	\end{equation}
	The optimality condition in the policy update rule \eqref{defn:grad-proj} then yields
\begin{align}
	\label{thm:ItrCmp-1}
	\MoveEqLeft
	\F(\lambda(\theta^{k+1}))  \geq  \F(\lambda(\theta^k)) + \langle \nabla_\theta\F(\lambda(\theta^k)),\theta^{k+1}-\theta^k\rangle - \frac{L}{2}\|\theta^{k+1}-\theta^k\|^2 \nonumber \\
	& =  \max_{\theta\in\Theta} \F(\lambda(\theta^k)) + \langle \nabla_\theta\F(\lambda(\theta^k)),\theta-\theta^k\rangle - \frac{L}{2}\|\theta-\theta^k\|^2\nonumber\\
	& \overset{\text{(a)}}{\geq}  \max_{\theta\in\Theta} \F(\lambda(\theta)) - L\|\theta-\theta^k\|^2\nonumber\\
	& \overset{\text{(b)}}{\geq}   \max_{\alpha\in[0,1]}\left\{\F(\lambda(\theta_{\alpha})) - L\|\theta_{\alpha}-\theta^k\|^2: \theta_{\alpha} = g(\alpha\lambda(\theta^*) + (1-\alpha)\lambda(\theta^k)) \right\}.
\end{align}
	Here, step (a) is due to \eqref{eq:taylor_proof} and step (b) uses the convexity of $\lambda(\Theta)$. Now, we proceed to analyze the right-hand side of \eqref{thm:ItrCmp-1}. First, by the concavity of $\F$ and the fact that $\lambda\circ g = id$, we have 
	$$\F(\lambda(\theta_{\alpha})) = \F(\alpha\lambda(\theta^*) + (1-\alpha)\lambda(\theta^k))\geq\alpha\F(\lambda(\theta^*)) + (1-\alpha)\F(\lambda(\theta^k)).$$
	Moreover, by the Lipschitz continuity assumption of $g$, we have 
	\begin{eqnarray}
	\label{eqn:important-gen}
		\|\theta_{\alpha} - \theta^k\|^2 & = & \|g(\alpha\lambda(\theta^*) + (1-\alpha)\lambda(\theta^k))- g(\lambda(\theta^k))\|^2\\
		& \leq & \alpha^2\ell_{\theta}^2\normm{\lambda(\theta^*) - \lambda(\theta^k)}^2\nonumber\\
		& \leq & \alpha^2\ell_{\theta}^2D_\lambda^2.\nonumber
	\end{eqnarray}
	Substituting the above two inequalities into the right-hand side of \eqref{thm:ItrCmp-1}, we get 
	\begin{align}
	\MoveEqLeft 
	\F(\lambda(\theta^*)) - \F(\lambda(\theta^{k+1})) \nonumber \\
	& \leq  \min_{\alpha\in[0,1]}\left\{\F(\lambda(\theta^*))-\F(\lambda(\theta_{\alpha})) + L\|\theta_{\alpha}-\theta^k\|^2: \theta_{\alpha} = g(\alpha\lambda(\theta^*) + (1-\alpha)\lambda(\theta^k)) \right\}\nonumber\\
	& \leq  \min_{\alpha\in[0,1]}(1-\alpha)\big(\F(\lambda(\theta^*))-\F(\lambda(\theta^k))\big) + \alpha^2L\ell_{\theta}^2D_\lambda^2 \,.
	\label{thm:ItrCmp-2-gen}
	\end{align}
	Let $\alpha_k = \frac{\F(\Lambda(\pi^*)) - \F(\Lambda(\pi^k))}{2L\ell_{\theta}^2D_\lambda^2}\geq0$, which is the minimizer of the RHS of  \eqref{thm:ItrCmp-2-gen} as long as it satisfies $\alpha_k\leq 1$. 
	
	Now, we claim the following: If $\alpha_k\ge 1$ then $\alpha_{k+1}<1$. Further, if $\alpha_k<1$ then $\alpha_{k+1}\le \alpha_k$. The two claims together mean that $(\alpha_k)_k$ is decreasing and all $\alpha_k$ are in $[0,1)$ except perhaps $\alpha_0$.

	To prove the first of the two claims, assume $\alpha_k\ge 1$.
	This implies that $\F(\Lambda(\pi^*)) - \F(\Lambda(\pi^k))\geq 2L\ell_{\theta}^2D_\lambda^2$. Hence, choosing $\alpha=1$ in \eqref{thm:ItrCmp-2-gen}, we get
	\[\F(\lambda(\theta^*)) - \F(\lambda(\theta^{k}))\leq L\ell_{\theta}^2D_\lambda^2\]
	which implies that $\alpha_{k+1}\le 1/2<1$.
	
	To prove the second claim, we plug  $\alpha_k$ into  \eqref{thm:ItrCmp-2-gen} to get
	\[
	\F(\lambda(\theta^*)) - \F(\lambda(\theta^{k+1})) \leq  \left(1-\frac{\F(\lambda(\theta^*)) - \F(\lambda(\theta^{k}))}{4L\ell_{\theta}^2D_\lambda^2}\right)(\F(\lambda(\theta^*)) - \F(\lambda(\theta^{k}))),
	\]
	which shows that $\alpha_{k+1}\le \alpha_k$ as required.
	
	Now, by our preceding discussion, for $k=1,2,\dots$ the previous recursion holds.
	Using the definition of $\alpha_k$, we rewrite this in the equivalent form  
	\[
	\frac{\alpha_{k+1}}{2}\leq \left(1-\frac{\alpha_{k}}{2}\right)\cdot\frac{\alpha_{k}}{2}.
	\] 
By rearranging the preceding expressions and algebraic manipulations, we obtain
	$$\frac{2}{\alpha_{k+1}} \geq \frac{1}{\left(1-\frac{\alpha_{k}}{2}\right)\cdot\frac{\alpha_{k}}{2}} = \frac{2}{\alpha_{k}} + \frac{1}{1-\frac{\alpha_{k}}{2}}\geq\frac{2}{\alpha_k} + 1.$$
	For simplicity assume that $\alpha_0<1$ also holds. Then,
    $\frac{2}{\alpha_{k}}\geq \frac{2}{\alpha_0} + k$, and consequenlty
	$$\F(\lambda(\theta^*)) - \F(\lambda(\theta^{k}))\leq \frac{\F(\lambda(\theta^*)) - \F(\lambda(\theta^0))}{1+ \frac{\F(\lambda(\theta^*)) - \F(\lambda(\theta^0))}{4L\ell_{\theta}^2D_\lambda^2}\cdot k} \leq \frac{4L\ell_{\theta}^2D_\lambda^2}{k}.$$
	A similar analysis holds when $\alpha_0>1$. Combining these two gives that 
	$\F(\lambda(\pi^*)) - \F(\lambda(\pi^{k}))\leq \frac{4L\ell_{\theta}^2D_\lambda^2}{k+1}$ no matter the value of $\alpha_0$, which proves the result. 
\end{proof}


\subsection{Proof of exponential convergence}
When the strong concavity of $F$ is available, we further provide the exponential convergence result. 
\begin{proof}
	We start from \eqref{thm:ItrCmp-1} whose proof requires no assumption on strong concavity of $F$, which is
	\begin{equation}
	\label{thm:ItrCmp-1'}
	\F(\lambda(\theta^{k+1})) \geq \max_{\alpha\in[0,1]}\left\{\F(\lambda(\theta_{\alpha})) - L\|\theta_{\alpha}-\theta^k\|^2: \theta_{\alpha} = g(\alpha\lambda(\theta^*) + (1-\alpha)\lambda(\theta^k)) \right\}.
	\end{equation}
	By the $\mu$-strong concavity of $F$, we have 
	$$\F(\lambda(\theta_{\alpha})) = \F(\alpha\lambda(\theta^*) + (1-\alpha)\lambda(\theta^k))\geq\alpha\F(\lambda(\theta^*)) + (1-\alpha)\F(\lambda(\theta^k)) + \frac{\mu}{2}\alpha(1-\alpha)\normm{\lambda(\theta^*)-\lambda(\theta^k)}^2.$$
	By the Lipschitz continuity of $g$, we know that 
	\begin{eqnarray*}
		\|\theta_{\alpha} - \theta^k\|  =  \|g(\alpha\lambda(\theta^*) + (1-\alpha)\lambda(\theta^k))- g(\lambda(\theta^k))\|
		\leq  \alpha\ell_\theta\normm{\lambda(\theta^*) - \lambda(\theta^k)}
	\end{eqnarray*}
	Substituting the above two inequalities into the right-hand side of \eqref{thm:ItrCmp-1'}, we get 
	\begin{eqnarray}
	\label{thm:ItrCmp-2'}
	& &\F(\lambda(\theta^*)) - \F(\lambda(\theta^{k+1})) \\
	& \leq & \min_{\alpha\in[0,1]}\left\{\F(\lambda(\theta^*))-\F(\lambda(\theta_{\alpha})) + L\|\theta_{\alpha}-\theta^k\|^2: \theta_{\alpha} = g(\alpha\lambda(\theta^*) + (1-\alpha)\lambda(\theta^k)) \right\}\nonumber\\
	& \leq & \min_{\alpha\in[0,1]}(1-\alpha)\big(\F(\lambda(\theta^*))-\F(\lambda(\theta^k))\big) - \alpha\left(\frac{1-\alpha}{2}\mu-L\ell_\theta^2\alpha\right)\normm{\lambda(\theta^*) - \lambda(\theta^k)}^2 \nonumber
	\end{eqnarray}
	Suppose we choose $\bar\alpha = \frac{1}{1+L\ell^2_\theta/\mu}<1$ such that $\left(\frac{1-\bar\alpha}{2}\mu-L\ell_\theta^2\bar\alpha\right)=0$. Then we have a contraction with modulus $1-\bar{\alpha}$ as
	$$\F(\lambda(\theta^*)) - \F(\lambda(\theta^{k+1})) \leq (1-\bar{\alpha})\F(\lambda(\theta^*)) - \F(\lambda(\theta^{k})).$$
	Consequently,  for any $k\geq1$, we have
	$$\F(\lambda(\theta^*)) - \F(\lambda(\theta^{k})) \leq (1-\bar{\alpha})^k\left(\F(\lambda(\theta^*)) - \F(\lambda(\theta^{0}))\right).$$
	which can be translated into iteration complexity by fixing $\epsilon$ and initialization $\theta^0$, and solving for the minimal $k$ such that  $\F(\lambda(\theta^*)) - \F(\lambda(\theta^{k}))\leq \epsilon$. Doing so is an algebraic exercise which results in
	$$\cO\left(\frac{1}{\bar{\alpha}}\log\left(\frac{\F(\lambda(\theta^*)) - \F(\lambda(\theta^{0}))}{\epsilon}\right)\right) = \cO\left(\frac{L{\ell}_\theta^2}{\mu}\log\left(\frac{1}{\epsilon}\right)\right)$$
\end{proof}

%
%
%
%
%
%
\section{Validating Assumption \ref{assumption:gen-para} for tabular policy case}
\label{appdx:bijection}
For the tabular policy case, the following Proposition holds true and hence the Assumption \ref{assumption:gen-para} is satisfied in this case. 
\begin{proposition}
	\label{proposition:bijection}
	Suppose $\xi_s>0$ for $\forall s\in\cS$. Then the following hold: 
	\begin{itemize}
		\item[(i).] The mappings $\Pi$ and $\Lambda$ form a pair of bijections between the convex sets $\das$ and $\cL$;	\\
		\item[(ii).] $\exists L_\lambda>0$ s.t. $\|\nabla\Lambda(\pi)-\nabla\Lambda(\pi')\| \leq L_\lambda\|\pi-\pi'\|, \forall \pi,\pi'\in\das$;
		\item[(iii).] For all $\lambda,\lambda'\in\cL$,
		we have 
		$$
		\|\Pi(\lambda) - \Pi(\lambda')\|^2\leq 2\sum_s 
		\Big(\sum_a (\lambda'_{sa} - \lambda_{sa})^2+
		(\sum_a \lambda'_{sa}-\lambda_{sa})^2\Big)/\big(\sum_a \lambda_{sa}\big)^{2}\!.
		$$
		Consequently, 
		$\|\Pi(\lambda) - \Pi(\lambda')\|\leq \frac{2}{\min_s \xi_s}\|\lambda-\lambda'\|_1$
	\end{itemize}  
\end{proposition}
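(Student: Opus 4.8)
\emph{Part (i).} The plan is to identify $\cL$ with the Bellman–flow polytope. A direct computation from \eqref{prop:dual-meaning-3} shows that $\lambda^\pi:=\Lambda(\pi)$ always satisfies $\lambda^\pi\ge 0$ and $\sum_a(I-\gamma P_a^\top)\lambda^\pi_a=\xi$, so $\Lambda$ maps $\das$ into $\cL$; moreover $\das$ is a product of simplices and $\cL$ is the intersection of an affine subspace with the nonnegative orthant, hence both are closed and convex. Conversely, summing the $s$-th flow constraint over $a$ gives, for every $\lambda\in\cL$, $\mu_s:=\sum_a\lambda_{sa}=\xi_s+\gamma\sum_{a,s'}P_a(s',s)\lambda_{s'a}\ge\xi_s>0$, so $\Pi(\lambda)$ is well defined and lies in $\das$. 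To see the maps are mutual inverses: \eqref{prop:dual-meaning-3} gives $\lambda^\pi_{sa}=\pi(a|s)\sum_{a'}\lambda^\pi_{sa'}$, hence $\Pi(\Lambda(\pi))=\pi$; and for $\lambda\in\cL$ with $\pi=\Pi(\lambda)$, substituting $\lambda_{sa}=\pi(a|s)\mu_s$ turns the flow equation into $\mu=\xi+\gamma P_\pi^\top\mu$ with $P_\pi(s,s')=\sum_a\pi(a|s)P_a(s,s')$ row-stochastic; since $\gamma<1$ this linear system has the unique solution $\mu=(I-\gamma P_\pi^\top)^{-1}\xi$, which is exactly the (un-normalized) state-occupancy vector of $\pi$, so $\lambda=\Lambda(\pi)$ and $\Lambda(\Pi(\lambda))=\lambda$.

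\emph{Part (ii).} I would write $\Lambda(\pi)=\big(\pi(a|s)\cdot[(I-\gamma P_\pi^\top)^{-1}\xi]_s\big)_{s,a}$, where $\pi\mapsto P_\pi$ is linear. Since the $\ell_\infty$ operator norm of $P_\pi$ is its maximum row sum, which is $1$ on $\das$, for $\pi$ in a sufficiently small open neighbourhood $\cU$ of the compact set $\das$ one still has $\gamma\|P_\pi\|_\infty<1$, so $(I-\gamma P_\pi^\top)^{-1}=\sum_{k\ge0}(\gamma P_\pi^\top)^k$ converges and $\Lambda$ extends to a $C^\infty$ map on $\cU$. Then $\nabla^2\Lambda$ is continuous, hence bounded on $\das$ by some $L_\lambda$, and since $\das$ is convex, integrating $\nabla^2\Lambda$ along the segment joining $\pi$ and $\pi'$ gives $\|\nabla\Lambda(\pi)-\nabla\Lambda(\pi')\|\le L_\lambda\|\pi-\pi'\|$.

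\emph{Part (iii).} This is a direct estimate. With $\mu_s:=\sum_a\lambda_{sa}$, $\mu'_s:=\sum_a\lambda'_{sa}$, I decompose
\[
\Pi_{sa}(\lambda)-\Pi_{sa}(\lambda')=\frac{\lambda_{sa}}{\mu_s}-\frac{\lambda'_{sa}}{\mu'_s}
=\frac{\lambda_{sa}-\lambda'_{sa}}{\mu_s}+\Pi_{sa}(\lambda')\cdot\frac{\mu'_s-\mu_s}{\mu_s}\,,
\]
apply $(u+v)^2\le 2u^2+2v^2$, and sum over $a$, using $\sum_a\Pi_{sa}(\lambda')^2\le(\sum_a\Pi_{sa}(\lambda'))^2=1$ and $\mu'_s-\mu_s=\sum_a(\lambda'_{sa}-\lambda_{sa})$; this yields $\sum_a(\Pi_{sa}(\lambda)-\Pi_{sa}(\lambda'))^2\le \frac{2}{\mu_s^2}\big(\sum_a(\lambda'_{sa}-\lambda_{sa})^2+(\sum_a\lambda'_{sa}-\lambda_{sa})^2\big)$, and summing over $s$ gives the displayed inequality. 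For the consequence, I use $\mu_s\ge\xi_s\ge\min_s\xi_s$ from part (i), then bound $\sum_{s,a}(\lambda'_{sa}-\lambda_{sa})^2\le\|\lambda-\lambda'\|_1^2$ and $\sum_s(\sum_a(\lambda'_{sa}-\lambda_{sa}))^2\le\|\lambda-\lambda'\|_1^2$ (each being at most the square of an $\ell_1$ norm of nonnegative entries), to get $\|\Pi(\lambda)-\Pi(\lambda')\|^2\le\frac{4}{(\min_s\xi_s)^2}\|\lambda-\lambda'\|_1^2$, hence the stated Lipschitz estimate. The main obstacle is part (ii): making the smoothness argument rigorous requires extending $\Lambda$ off $\das$ (where the rows of $\pi$ need not be probability vectors) and verifying that $I-\gamma P_\pi^\top$ stays invertible there; parts (i) and (iii) are, respectively, standard occupancy-measure theory and elementary algebra.
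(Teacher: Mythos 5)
Your proposal is correct, and for parts (i) and (iii) it tracks the paper closely; the only genuine difference is in how part (ii) is handled. In (iii) you use exactly the paper's decomposition $\Pi_{sa}(\lambda)-\Pi_{sa}(\lambda')=\frac{\lambda_{sa}-\lambda'_{sa}}{\mu_s}+\frac{\mu'_s-\mu_s}{\mu_s\mu'_s}\lambda'_{sa}$, the bound $(u+v)^2\le 2u^2+2v^2$, the estimate $\sum_a \Pi_{sa}(\lambda')^2\le 1$ (the paper writes it as $\sum_a(\lambda'_{sa})^2\le(\mu'_s)^2$), the lower bound $\mu_s\ge\xi_s$, and the $\ell_2$-versus-$\ell_1$ comparison, so this part is essentially identical. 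For (i) the paper simply cites standard references (Altman's book and the appendix of the cautious-RL paper), whereas you supply the argument: $\Pi\circ\Lambda=\mathrm{id}$ from $\lambda^\pi_{sa}=\pi(a|s)\sum_{a'}\lambda^\pi_{sa'}$, and $\Lambda\circ\Pi=\mathrm{id}$ from the uniquely solvable flow equation $\mu=\xi+\gamma P_\pi^\top\mu$; this is the standard occupancy-measure argument and is consistent with what the paper defers to its references, with the extra benefit that it also yields $\mu_s\ge\xi_s$, which you then reuse in (iii). For (ii) both proofs share the same skeleton — extend $\Lambda$ smoothly to an open neighbourhood of $\das$, then use compactness and convexity of $\das$ to bound the Hessian and conclude Lipschitz continuity of the Jacobian — but the extension mechanisms differ: the paper observes that the defining series $\sum_t\gamma^t\,\mathbb{P}(s_t=s,a_t=a)$ is an absolutely convergent series of polynomials in $\pi$ that remains absolutely convergent under small perturbations of $\pi$ outside $\das$, while you instead use the closed form $\Lambda_{sa}(\pi)=\pi(a|s)\,[(I-\gamma P_\pi^\top)^{-1}\xi]_s$ and a Neumann-series argument that $I-\gamma P_\pi^\top$ remains invertible wherever $\gamma\|P_\pi\|_\infty<1$, which holds on a neighbourhood of $\das$ by continuity. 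Your resolvent route is a bit more explicit about why the smooth extension exists and avoids handling an infinite series of polynomials term by term; the paper's route stays closer to the probabilistic definition. Either way the decisive step (smoothness on an open neighbourhood plus compactness of $\das$) coincides, so there is no gap in your argument.
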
 

\begin{proof}$~$\vspace{0.1cm}\\
\textbf{Proof of (i)}:
	The equations $\Pi\circ\Lambda = \mathrm{id}_{\cL}$ and $\Lambda\circ\Pi = \mathrm{id}_{\das}$ are standard. See, e.g., \cite{altman1999constrained} or Appendix A of \cite{zhang2020cautious}.  \vspace{0.3cm}\\
\textbf{Proof of (ii)}:
	For the existence of the $L_\lambda$-Lipschitz constant of the gradient $\nabla\Lambda$, note that 
	the $t$-th term of the infinite sum 
	\[
	\Lambda_{sa}(\pi) = \sum_{t=0}^\infty \gamma^t\cdot\mathbb{P}\bigg(s_t = s, a_t = a\,\,\bigg|\,\,\pi, s_0\sim\xi \bigg)
	\]
	is a $(t+1)$-th order polynomial. Therefore, $\Lambda_{sa}(\pi)$ can actually be defined for any $\pi$ even if $\pi\notin\das$, as long as this infinite series of polynomial of $\pi$ converges absolutely. Note that for $\forall \pi\in\das$, since $0\leq\mathbb{P}\big(s_t = s, a_t = a\,\,\big|\,\,\pi, s_0\sim\xi \big)\leq1$ this infinite series is absolutely convergent. Because we have $0<\gamma<1$, even if we slightly purterb the $\pi$ within a neighbourhood of it (not necessarily in $\das$ after purterbation), the infinite series is still absolutely convergent. This indicates that  $\Lambda_{sa}$ is infinitely continuously differentiable in an open neighbourhood containing $\das$, then due to the compactness of $\das$, we are able to argue that there exists a $L_\lambda$ s.t. $\nabla\Lambda$ is $L_\lambda$-Lipschitz continuous within $\das$.

\textbf{Proof of (iii)}:
	Now, we provide the calculation of the Lipschitz constant of $\Pi$. For the ease of notation, let us define $\mu_s = \sum_{a\in\cA} \lambda_{sa}$ and $\mu_s' = \sum_{a\in\cA}\lambda_{sa}'$. Then for  $\forall \lambda,\lambda'\in\cL$ and $\forall (s,a)\in\cS\times\cA$, it holds that
	\begin{eqnarray*}
		\Pi_{sa}(\lambda) - \Pi_{sa}(\lambda') & = & \frac{\lambda_{sa}}{\mu_s} -\frac{\lambda_{sa}'}{\mu_s'}\\
		& = & \left(\frac{\lambda_{sa}}{\mu_s} -\frac{\lambda_{sa}'}{\mu_s}\right) + \left(\frac{\lambda_{sa}'}{\mu_s} -\frac{\lambda_{sa}'}{\mu_s'}\right)\\
		& = & \frac{1}{\mu_s}(\lambda_{sa}-\lambda_{sa}') + \frac{\mu_s'-\mu_s}{\mu_s\mu_s'}\lambda_{sa}'. 
	\end{eqnarray*}
	Consequently, we can compute the norm difference of the preceding expression and apply the triangle inequality:
	\begin{eqnarray}
	\label{prop:bijection-1}
	\|\Pi(\lambda) - \Pi(\lambda')\|^2 & = & \sum_{s\in\cS}\sum_{a\in\cA} \left(\Pi_{sa}(\lambda) - \Pi_{sa}(\lambda')\right)^2\\
	& \leq & 2\sum_{s\in\cS}\sum_{a\in\cA} \frac{1}{\mu_s^2}(\lambda_{sa}-\lambda_{sa}')^2 +2\sum_{s\in\cS}\sum_{a\in\cA} \frac{(\mu_s'-\mu_s)^2}{\mu_s^2(\mu_s')^2}(\lambda_{sa}')^2\nonumber\\
	&\leq & 2\sum_{s\in\cS} \frac{1}{\mu_s^2}\left(\sum_{a\in\cA}(\lambda_{sa}-\lambda_{sa}')^2 + (\mu_s'-\mu_s)^2\right)\,,\nonumber
	\end{eqnarray}
	where the last inequality follows because $\norm{x}_2^2 \le \norm{x}_1^2$ holds for any vector $x$ (here, $\norm{\cdot}_p$ denotes the $p$-norm). Finally, note that 
	$\mu_s\geq\xi_s>0$, we have 
	\begin{eqnarray} 
	\|\Pi(\lambda) - \Pi(\lambda')\|^2 &\leq & 2\sum_{s\in\cS} \frac{1}{\mu_s^2}\left(\sum_{a\in\cA}(\lambda_{sa}-\lambda_{sa}')^2 + (\mu_s'-\mu_s)^2\right)\nonumber\\
	& \leq & \frac{2}{\min_s\xi_s^2}\sum_{s\in\cS} \left(\sum_{a\in\cA}(\lambda_{sa}-\lambda_{sa}')^2 + \big(\sum_{a\in\cA}|\lambda_{sa}-\lambda_{sa'}|\big)^2\right)\nonumber\\
	& \leq & \frac{4}{\min_s\xi_s^2}\|\lambda-\lambda'\|_1^2\nonumber
	\end{eqnarray}
	Take the square root of both sides completes the proof. 
\end{proof}

\section{Proof of Theorem \ref{theorem:iteration complexity}}
\begin{proof}
To prove this theorem, it suffices to observe that \eqref{thm:ItrCmp-1} is still true with $\theta = \pi$, $\lambda(\theta) = \Lambda(\pi)$ and $g(\lambda) = \Pi(\lambda)$. Therefore, \eqref{thm:ItrCmp-1} can be translated as 
\begin{equation}
\label{thm:ItrCmp-1''}
\F(\Lambda(\pi^{k+1})) \geq \max_{\alpha\in[0,1]}\left\{\F(\Lambda(\pi_{\alpha})) - L\|\pi_{\alpha}-\pi^k\|^2: \pi_{\alpha} = \Pi(\alpha\Lambda(\pi^*) + (1-\alpha)\Lambda(\pi^k)) \right\}.
\end{equation}
By the concavity of $\F$ and the fact that $\Lambda\circ \Pi = id$, we have  
\begin{equation}
\label{eqn:-1}
\F(\Lambda(\pi_{\alpha})) = \F(\alpha\Lambda(\pi^*) + (1-\alpha)\Lambda(\pi^k))\geq\alpha\F(\Lambda(\pi^*)) + (1-\alpha)\F(\Lambda(\pi^k)).
\end{equation}
For the inequality \eqref{eqn:important-gen}, we can derive a tighter bound by the following argument:
\begin{eqnarray}
\label{eqn:important}
\|\pi_{\alpha} - \pi^k\|^2 & = & \|\Pi(\alpha\Lambda(\pi^*) + (1-\alpha)\Lambda(\pi^k))- \Pi(\Lambda(\pi^k))\|^2\\
& \leq & \alpha^2\sum_s\frac{1}{\big(\sum_a \lambda_{sa}\big)^2}\left(\sum_a (\lambda^*_{sa} - \lambda_{sa})^2 + \big(\sum_a \lambda^*_{sa} - \sum_a \lambda_{sa}\big)^2\right)\nonumber\\
& \leq & 4\alpha^2\sum_s\frac{1}{\big(\sum_a \lambda_{sa}\big)^2}\left(\big(\sum_a \lambda^*_{sa}\big)^2+\big(\sum_a \lambda_{sa}\big)^2\right)\nonumber\\
& = & 4\alpha^2\sum_s\frac{\big(d_\xi^{\pi^*}(s)\big)^2+\big(d_\xi^{\pi^k}(s)\big)^2}{\big(d_\xi^{\pi^k}(s)\big)^2}\nonumber\\
& = & 4\alpha^2|\cS| + 4\alpha^2\sum_s\left(\frac{d_\xi^{\pi^*}(s)}{d_\xi^{\pi^k}(s)}\right)^2 \nonumber\\
& \leq & 4\alpha^2|\cS| + 4\alpha^2|\cS|\left\|\frac{d_\xi^{\pi^*}}{d_\xi^{\pi^k}}\right\|_\infty^2\nonumber\\
& \leq & 4\alpha^2|\cS|\cdot\left(1+(1-\gamma)^{-2}\left\|d_\xi^{\pi^*}/\xi\right\|_\infty^2\right)\nonumber\\
&\leq&\frac{5\alpha^2|\cS|}{(1-\gamma)^{2}}\left\|d_\xi^{\pi^*}/\xi\right\|_\infty^2\nonumber
\end{eqnarray}
Denote $D: = \frac{5|\cS|}{(1-\gamma)^{2}}\left\|d_\xi^{\pi^*}/\xi\right\|_\infty^2$. Substituting the above two inequalities into the right-hand side of \eqref{thm:ItrCmp-1''}, we get 
\begin{align}
\MoveEqLeft 
\F(\Lambda(\pi^*)) - \F(\Lambda(\pi^{k+1})) \nonumber \\
& \leq  \min_{\alpha\in[0,1]}\left\{\F(\Lambda(\pi^*))-\F(\Lambda(\pi_{\alpha})) + L\|\pi_{\alpha}-\pi^k\|^2: \pi_{\alpha} = \Pi(\alpha\Lambda(\pi^*) + (1-\alpha)\Lambda(\pi^k)) \right\}\nonumber\\
& \leq  \min_{\alpha\in[0,1]}(1-\alpha)\big(\F(\Lambda(\pi^*))-\F(\Lambda(\pi^k))\big) + LD\alpha^2 \,.
\label{thm:ItrCmp-2}
\end{align}
Note that \eqref{thm:ItrCmp-2} differs from \eqref{thm:ItrCmp-2-gen} by replacing $\ell_{\theta}^2D_\lambda^2$ with $D$. The latter proof of Theorem \ref{theorem:iteration complexity} is almost identical to that of Theorem \ref{theorem:iteration complexity-gen} and hence we omit the proof.
\end{proof}


\end{document}